\def\eqref#1{equation~\ref{#1}}
\def\1{\bm{1}}
\DeclareMathAlphabet{\mathsfit}{\encodingdefault}{\sfdefault}{m}{sl}
\SetMathAlphabet{\mathsfit}{bold}{\encodingdefault}{\sfdefault}{bx}{n}
\DeclareMathOperator*{\argmin}{arg\,min}
\newcommand{\mb}[1]{\mathbf{#1}}
\newcommand{\mbb}[1]{\mathbb{#1}}
\newcommand{\bg}[1]{\boldsymbol{#1}}
\newcommand{\D}{\mathcal{D}}
\newcommand{\bthe}{\boldsymbol{\theta}}
\newcommand{\xst}{\mb{x}}
\newcommand{\yst}{\mb{y}}
\newcommand{\pist}{\bg{\pi}}
\newcommand{\tpist}{\tilde{\bg{\pi}}}
\newcommand{\T}{\mathcal{T}}
\newcommand{\K}{{K-1}}
\newcommand{\MMD}{\mathrm{MMD}}
\newcommand{\bern}{\mathrm{Bern}}
\newcommand{\diag}{\mathrm{diag}}
\newcommand{\lip}{\mathrm{L}}
\newcommand{\opt}{\ast}
\newcommand{\x}{\mb{x}}
\newcommand{\qx}{q_{\xst}}
\newcommand{\bqx}{\bar{q}_{\xst}}
\newcommand{\expt}{\mbb{E}}
\newcommand{\satie}[1]{\textcolor{magenta}{[Satie: #1]}}
\newcommand{\cnote}[1]{\textcolor{blue}{[Yufei: #1]}}
\newcommand{\abcnote}[1]{\textcolor{red}{[ABC: #1]}}
\newcommand{\comments}[1]{}
\newtheorem{lemma}{Lemma}
\newtheorem{assumption}{Assumption}
\title{Accelerating Monte Carlo Bayesian Prediction via Approximating Predictive Uncertainty over the Simplex}
\author{%
  Yufei Cui\\
  \And
  Wuguannan Yao\\
    \And
  Qiao Li\\
    \And
  Antoni B. Chan\\
  \And
  Chun Jason Xue\\
}
\date{\today}
\begin{document}
	
	\maketitle
	\begin{abstract}
		Estimating the predictive uncertainty of a Bayesian learning model is critical in various decision-making problems, e.g., reinforcement learning, detecting adversarial attack, self-driving car.
		As the model posterior is almost always intractable, most efforts were made on finding an accurate approximation the true posterior.
		Even though a decent estimation of the model posterior is obtained,
		another approximation is required to compute the predictive distribution over the desired output.
 		A common accurate solution is to use Monte Carlo (MC) integration.
		However, it needs to maintain a large number of samples,
		evaluate the model repeatedly and average multiple model outputs.
		In many real-world cases, this is computationally prohibitive.
		In this work, assuming that the exact posterior 
		or a decent approximation 
		is obtained,
		we propose a generic framework to approximate the output probability distribution induced by model posterior with a parameterized model and in an amortized fashion.
		The aim is to approximate the true uncertainty of a specific Bayesian model,
		meanwhile alleviating the heavy workload of MC integration at testing time.
		The proposed method is universally applicable to Bayesian classification models that allow for posterior sampling.
		Theoretically, we show that the idea of amortization incurs no additional costs on approximation performance.
		%
		Empirical results validate the strong practical performance of our approach.
		
	\end{abstract}
	
	

	\section{Introduction}

	\label{intro}
	Bayesian inference is a principled method to estimate the uncertainty of probabilistic models.
	In most applications, especially in deep learning, the likelihood model and model prior are not conjugate hence
	marginalizing over model prior or posterior cannot be performed analytically, which hinders the practical applicability.
	For tractability, 
	a simple point estimate such as maximum \emph{a posteriori} (MAP) estimate could be used 
	to approximate the full model posterior.
	The price paid is the loss of model uncertainty due to incomplete characterization of the model posterior.
    Approximate inference methods, such as Markov chain Monte Carlo and variational inference, enhance the approximate posterior by a better probability distribution while keeping inference tractability.
	However, even though a decent approximation of posterior can be obtained, computation of predictive distribution is usually intractable due to loss of conjugacy, and is of high cost if tractable.

	%
	
	%
	%
	%
	%
	%
	%
	%
    	
	To introduce the problem, we consider a Bayesian classification model trained on dataset
    $\D = \{(\mb{y}_n, \x_n)\}_{n=1}^N$,
	where $\x_n\in \mathcal{X}$ and $\mb{y}_n\in \mathcal{Y}$ are the $n$th input and output, respectively.
	Let the model posterior $p(\bthe|\D)$ be approximated by MC estimate
	$\tfrac{1}{S} \sum_{s = 1}^{S} \delta(\bthe - \bthe_{s})$,
	and the 
	predictive distribution (a categorical distribution parameterized by the predicted \emph{class probabilities}) is thus approximated by
	\begin{equation*}
	p(\mb{y}|\mb{x}, \D) = \int p(\mb{y}|\mb{x},\bthe) p(\bthe | \D) \,\mathrm{d} \bthe \approx \frac{1}{S} \sum_{s=1}^{S} p(\mb{y} | \mb{x},\bthe_s).
	\end{equation*}
	%
	The predictive distribution can be accurately estimated as $S \rightarrow \infty$.
	However, to perform the computation, we need to maintain a large number of samples,
	repeatedly evaluate the model for $S$ times and finally average the model outputs.
    This problem is critical in many real-world cases.
    For example, assisted-driving car system requires an accurate measure of uncertainty to avoid making mistakes with a high confidence~\citep{kendall2017uncertainties, sunderhauf2018limits}.
    Due to the limited computational resources and storage in such system, it's hard to maintain a large number of samples and perform $S$ times evaluation of the Bayes model for the real-time image data.
	

	In this work, aiming at boosting the prediction speed while maintaining a rich characterization of the prediction, 
	we propose to approximate the \emph{distribution of class probabilities} over the simplex induced by the model posterior $p(\bthe|\D)$, in an amortized fashion.
	This naturally diverts the heavy-load MC integration process from testing period to approximation period.
	Different from the previous work in Bayesian knowledge distillation~\citep{balan2015bayesian, bulo2016dropout} that only focuses on the output categorical distribution (a point on simplex), the induced distribution over simplex provides: 1) rich knowledge includes prediction confidence for identifying out-of-domain (OOD) data (see empirical examples in Fig.~\ref{fig:empicical}); 2) the possibility to use more expressive distributions as the approximate model.

	We term the Bayes classifier as ``Bayes teacher'' and  the approximate distribution as ``student'', due to the analogy with teacher-student learning.
	A Dirichlet distribution is used as the student due to its expressiveness, conjugacy to categorical distribution and its efficient reparameterization for training.
	We propose to explicitly disentangle the parameters of the student into a prediction model (PM) and concentration model (CM), which capture class probability and sharpness of Dirichlet respectively.
	The CM output can directly be used as a measure for detecting OOD data.
	We term our approximation method as One-Pass Uncertainty (OPU) as it simplifies real-world evaluation of Bayesian models by computing the predictive distribution with only one model evaluation.
    Note that, OPU allows choosing various types of student model (e.g., compressed neural network~\citep{lin2017towards, hubara2016binarized, han2015deep}) for further speedup on specific platforms with \textbf{no} extra design efforts.

    As the amortized approximation of induced distributions is unexplored in the literature, we consider and compare several choices of probability distance measure: forward KL, earth-mover's distance (EMD) and maximum mean discrepancy (MMD).
    We theoretically analyze the performance gap incurred by the amortized approximation and show that, under MMD, besides model loss due to restriction of student distribution, the amortized approximation does not introduce additional loss.

    Empirical evaluations show a significant speedup ($\sim500\times$) of Bayes models.
	The results on Bayes NN show that OPU performs better in misclassification detection and OOD detection than state-of-the-art works in Bayesian knowledge distillation.
	It can also be observed that explicit disentangling of mean and concentration helps improve the performance.
	The comparisons of different probability  measures validate the theoretical analysis.
	We also conduct empirical evaluations and comparisons on Bayes logistic regression and Gaussian process, to show OPU is universally applicable to all Bayesian classification models.

	\section{One-Pass Uncertainty Framework}\label{sec:framework}

	\subsection{Induced Distribution over Simplex}
	\label{induced}
	In this section we present our OPU framework for a generic Bayesian parametric classifier,
	e.g. Bayesian logistic regression (BLR) or Bayesian neural networks (NN).
	Let the Bayesian classifier be specified with categorical likelihood and a parametric function, i.e.,
	\begin{equation}
	p(\mb{y}|\mb{x},\bthe) = \mathrm{Cat}(\mb{y} | \T(\mb{x}; \bthe))
	\end{equation}
	and a prior distribution $p(\bthe)$ be specified over the parameter space $\Theta$,
	where $\T : \, \mathcal{X} \times \Theta \rightarrow \mathcal{S}^{\K}$ is a parametric function from input space to simplex, e.g., a neural network with softmax output layer, and $K$ is the number of classes.
	In this paper, we assume the posterior $p(\bthe|\D)$ is obtained and focus on the computation of the predictive distribution.
    In what follows, let $p(\bthe | \D)$ be approximate or exact (if available) model posterior, from which samples could be obtained.

	From a dual perspective, with a fixed input $\xst$, the mapping $\T_{\xst} = \T(\xst; \cdot)$ is also understood as a data-dependent mapping
	$\T_{\xst}: \, \Theta \rightarrow \mathcal{S}^{\K}$, which could be used to transform the posterior $p(\bthe | \D)$
	to a conditional distribution (on $\xst$) over the simplex.
	%
    That is, for each $\xst$, we can define a random variable $\pist = \T_{\xst}(\bthe)$ whose distribution is induced by $\bthe \sim p(\bthe | \D)$ and $\T_{\xst}$.
	This is effectively a well-defined push-forward measure $\T_{\mb{x}} \# p(\bthe | \D)$ over the simplex (see Fig.~\ref{fig:induced}).
    %
    We term the conditional distribution of $\pist | \xst, \D$ as an ``induced distribution'' and define $p_{\xst}(\pist) = p(\pist | \xst, \D)$ to keep the notation uncluttered.

	\begin{figure}
    \centering
    \begin{minipage}{0.53\textwidth}
    \centering
        \includegraphics[width=\textwidth]{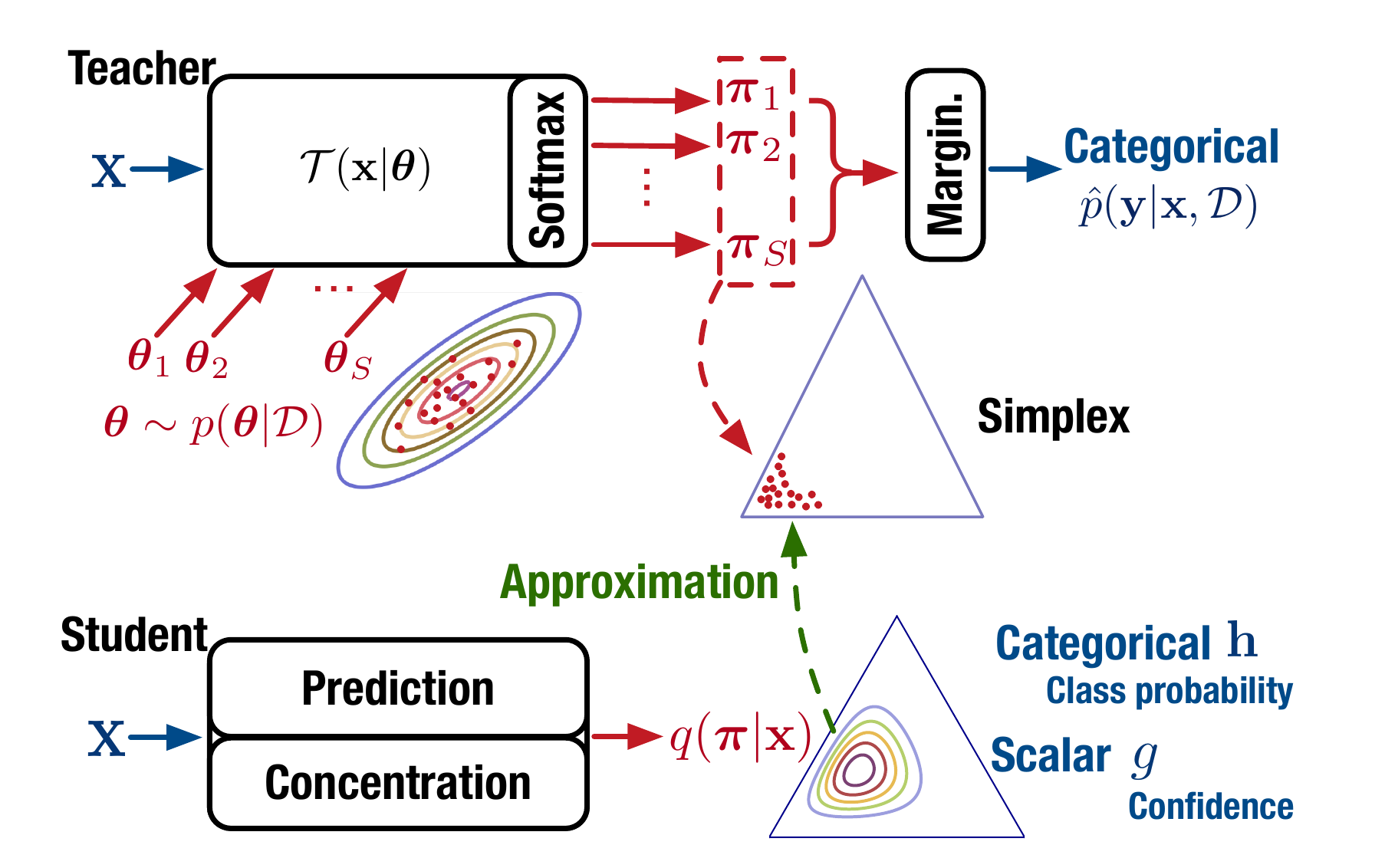} 
        \caption{
        An intuitive example of the proposed framework.
        Only one input $\xst$ is consider in this example.
        ``Margin.'' indicates marginalization over $\pist$.
        }
		\label{fig:induced}
    \end{minipage}\hfill
    \begin{minipage}{0.45\textwidth}
	\centering
	\includegraphics[width=\textwidth]{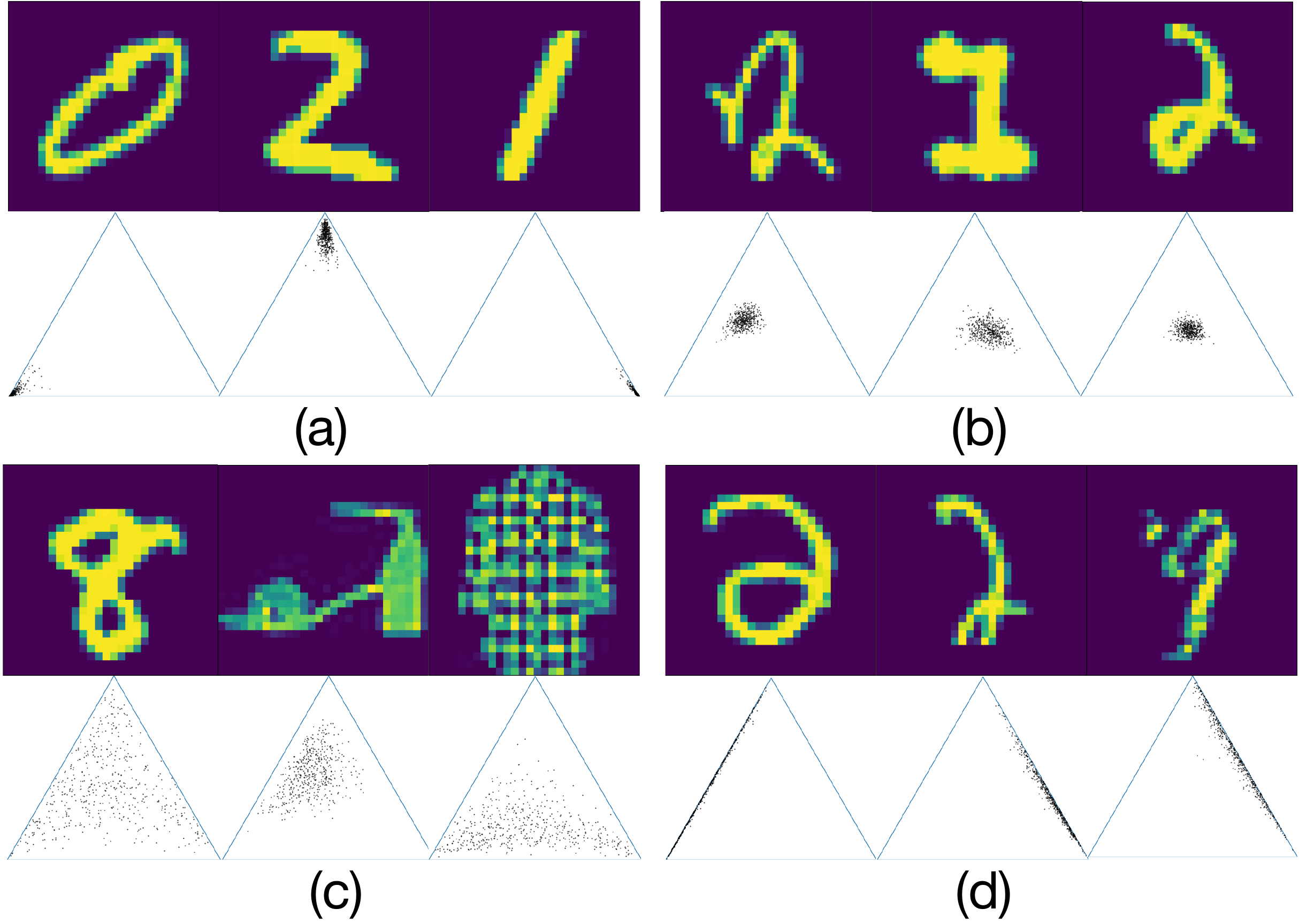}
	\caption{An empirical example of MC estimate of the induced distribution over simplex.
	The classifier (MCDP) is trained on real-world digits images of $\{0,1,2\}$ (corresponding to left, right, top corners of simplex).
	The 1st and 3rd rows indicate the input while the 2nd and 4th rows indicate MC estimate of $\pist$ over a 2-simplex.
	}
	\label{fig:empicical}
    \end{minipage}

    \end{figure}
	
	%
	
	The induced distribution isolates the dependence between input and output and simplifies the computation of predictive distribution due to the change-of-variable formula.
	Specifically, given a test point $\xst^\ast$, the predictive distribution can be alternatively written as
	\begin{equation}
	p(\yst | \xst^\ast, \D)  = \int_{\Theta} \mathrm{Cat}(\yst | \T_{\xst^{\ast}}(\bthe)) p(\bthe | \D) \,\mathrm{d}\bthe 
	= \int_{\mathcal{S}^{\K}} \mathrm{Cat}(\yst | \pist) p_{\xst^{\ast}}(\pist) \,\mathrm{d}\pist. \label{eq:pi}
	\end{equation}
	Our key insight is that $p_{\xst}$ contains all information we need for prediction and uncertainty measurement.
	Hence $\pist$ is sufficient in the sense that, given $\pist$, $\yst$ is independent of both $\bthe$ and $\xst$.
	The isolation combines the complexities in both the likelihood and posterior into a single object $p_\xst$, and keeps a simple dependence structure between $\yst$ and $\pist$.
    Also, the isolation renders the last probabilistic layer $\yst | \pist$ nuisance, which can thus be peeled off for prediction.
    %
    To validate the idea, one can show that the predictive distribution is simply $p(\yst | \xst, \D) = \mathrm{Cat}(\yst | \expt_{p_{\xst}} \pist)$.
	%
	%
	The difference between probabilistic structures of original Bayesian model and the isolated version is showed in Fig.~\ref{fig:graphic}, Appendix.

	Although the density function $p_\xst$ is sometimes hard to compute, especially for complicated $\T_{\xst}$ (like NN), its samples can be obtained via
	1) sampling the posterior, i.e. $\bthe_{s} \sim p(\bthe|\D)$;
	2) ``push-forward'' the samples by $\T_{\xst}$, i.e. $\pist_{s} = \T_{\xst}(\bthe_{s})$.
	The behavior of $\pist$ can be empirically observed via the \emph{particles} $\{\pist_s\}_{s=1}^S$.
	As shown in Fig.~\ref{fig:empicical}, $\pist$ is able to distinguish different types of input $\xst$ based on the behavior of its samples.
    In Fig.~\ref{fig:empicical}(a), the inputs are similar to the training data.
    The particles gather around a corner of the simplex, indicating a confident prediction.
    In Fig.~\ref{fig:empicical}(b), the inputs are digit images but are relatively hard to predict.
    Therefore, the particles gathers around the center, indicating the model is certain that the input is on decision boundary.
	In Fig.~\ref{fig:empicical}(c), the inputs are images outside the domain of training data, the particles spread over the simplex, which means the model has a high uncertainty about input, indicating out-of-domain (OOD).
	In Fig.~\ref{fig:empicical}(d), the particles spread along a line between two corners, indicating the model is confident that the result is not at the other corner.
	%
	%

	%
	%
	However, using particles for inference is time-consuming as each particle requires one evaluation of the model.
	This motivates us to use a tractable conditional distribution $q(\pist | \xst)$ to approximate $p_\xst$.
	\comments{
	\satie{<IMPORTANT:> It is hard to measure uncertainties only with particles.}
	\cnote{The hardness comes from the estimate of concentration of the ensemble.}
	\abcnote{particles don't work in high dimensions because you need an exponential number of particles to fill the space as the dimension increases. For example, with Bayesian NN posterior, you have 50 particles in a >1 million dimensional space! the 50 particles can only represent the probability in a 50-dimensional subspace. There is a lot of (old) work on particle filters about how to make them work in high dimension, or how to improve them (rao-blackwellization).  }
	\abcnote{we could mention a regularization effect when turning the particles into a distribution. However, since we are using NN to learn the predictive distribution, it may be prone to overfitting too.}
}

	\subsection{Amortized Approximation}

	\label{amortized_approx}
    The view of $p_\xst$ enables flexible choices of $q(\pist|\xst)$, as any distribution defined on $\mathbb{R}^K$ can be transformed to $\mathcal{S}^{K-1}$ via logistic transformation~\citep{aitchison1982statistical}.
	However, modeling $q(\pist|\xst)$ locally for every input is not practical, as the design efforts and number of parameters grows linearly with the number of data points.
	Therefore, we propose to approximate $p_\xst$ in two aspects: 1) use a single family of distribution $q$; 2) to generalize to unseen examples, let the parameter of $q$, $\bg{\alpha}_\xst=\bg{\alpha}(\xst;\bg{\phi})$, be a function depending on $\xst$ and parameterized by a set of global adaptive parameters $\bg{\phi}$, and thus $q_\xst=q(\pist|\xst)=q(\pist|\bg{\alpha}_\xst)$.
	%
	%
	The computational cost is amortized by casting the problem of learning a series of conditional distributions to a regression problem.

	We term $p_\xst$ as ``teacher distribution'' and  $q_\xst$ as ``student''.
	In our method, as seen in graphical representation in Fig.~\ref{fig:graphic}, Appendix, by proper approximation, the stochasticity and knowledge in node $\bthe$ of the teacher is transferred into node $\pist$ of the student model, such that the full predictive uncertainty is maintained.
	In terms of computation, the approximation requires sampling only in the training stage.
	While in the testing stage, to obtain the predictive distribution, only one evaluation is required.
    Thus, we denote the framework as the One-Pass Uncertainty (OPU) model.	
	
	%
	%
	
	
	
    
    The above benefits do not introduce any concession on generalizability.
	Since OPU is based on approximating the distribution of the output class probabilities which is common for all classifiers,
	the amortized approximation can be applied to any Bayesian classifier.
	Note that the approximation framework can be extended to non-parametric model like Gaussian process, where the computational cost of inference is high \footnote{The details of extracting samples from various Bayes parametric models and non-parametric model (GP classifier) are summarized in the appendix.}.
    In this work, we choose the student model $q$ to be a Dirichlet distribution,
	%
	$q(\pist | \bg{\alpha}_\xst) = \mathrm{Dir}(\pist | \bg{\alpha}_\xst)$,
	%
	where $\bg{\alpha}_\xst$ is a function mapping input $\xst$ to a Dirichlet parameter.
	The reasons for choosing Dirichlet is the tractability: the Dirichlet is the conjugate prior to the Categorical, and thus enables tractable integration of (\ref{eq:pi}) given the parameters.
	To better disentangle the uncertainty measures, we use the design $\bg{\alpha}_\xst = \mb{h}_\xst \cdot \mathrm{e}^{g_\xst}$,
	where $\mb{h}_\xst=\mb{h}(\xst; \bg{\phi}_1)$ and $g_\xst=g(\xst; \bg{\phi}_2)$ are two neural networks, 
	and the vector output $\mb{h}$ sums to 1.
	%
	Vector output $\mb{h}$ determines the mean 
	of the Dirichlet (i.e., the predicted class probabilities),
	and $g$ determines the concentration of the Dirichlet (i.e., the prediction confidence).
	%
	%
	To see this, 
	%
    %
	the posterior of the class labels is the Dirichlet mean,
	$p(y_\ell = 1 | \xst, \bg{\phi}) = \int \text{Cat}(y_\ell = 1 | \pist) \mathrm{Dir}(\pist | \bg{\alpha}_\xst) \, \mathrm{d}\pist = \frac{\alpha_{\xst,\ell}}{\sum_{c} \alpha_{\xst,c}} = h_{\xst,\ell}$
	%
    where $y_\ell$ and $\alpha_{\xst,\ell}$ are the $\ell$-th coordinate of $\yst$ and $\bg{\alpha}_\xst$ respectively.
	Therefore, we call $\mb{h}_\xst$ as the ``prediction model'' (PM).
	Similarly, the precision parameter $\alpha_0$ (determines sharpness) of the Dirichlet solely depends on $g_\xst$,
    $\alpha_0 = \sum_{c} \alpha_{\xst,c} = \sum_{c} h_{\xst,c}e^{g_\xst} = \mathrm{e}^{g_\xst}$.
	Therefore, we term $g_\xst$ as the ``concentration model'' (CM).
	Based on this property, whether the Dirichlet is flat or not, can be fully characterized by CM.
    It can be expected that, when approximating particles in Fig.~\ref{fig:empicical}(a) and (b), the output value of CM is high, as the samples are concentrated, which means high confidence.
    CM outputs a low value for particles in Fig.~\ref{fig:empicical}(c), yielding a flat distribution and low confidence.

	\subsection{Learning}

	
	With a probability distance or divergence, we define the approximation loss $\mathcal{L}(\xst, \bg{\alpha}) = \rho(p_\xst, q_\xst)$
	The student model is trained to minimize the aggregated objective
	\begin{equation}\label{general-goal}
	\min_{\bg{\alpha} \in \mathcal{F}} \quad \mathbb{E}_{p(\xst)} \mathcal{L}(\xst,\bg{\alpha})
	\end{equation}
	where $\mathcal{F}$ is some hypothesis space
	and $p(\xst)$ is some distribution over $\mathcal{X}$. In practice we take $p(\xst) = p_{\D^\prime}(\xst)$ with $\D' = \{\xst_{m}\}_{m = 1}^{M}$ a held-out dataset containing features only.
	As the amortized approximation of induced distributions in Bayesian classifiers is unexplored in the literature, 
	we consider and compare several choices of $\rho$ including KL divergence, earth mover's distance (EMD) and maximum mean discrepancy (MMD).
    The corresponding derivation of the training objectives and training algorithms are in the appendix.

	\paragraph{Forward KL divergence}
    Minimizing aggregated reverse KL divergence is not tractable in our scenario as the density function $p_\xst$ is not available in general.
    This difficulty is avoided by using forward KL, in which the intractable density function is only involved in the irrelevant entropy term.
    It is equivalent to using cross-entropy as a local loss, i.e. $\mathcal{L}(\xst, \bg{\alpha}) = - \expt_{p_\xst} \ln q_\xst$.
	By plugging in a particle estimation $\hat{p}_\xst = \frac{1}{S}\sum_{s=1}^S\delta(\pist - \T_{\xst}(\bthe_s))$, the training objective becomes
    $\min_{\bg{\alpha}} - \mathbb{E}_{p_{\D'}(\xst)} \frac{1}{S}\sum_s\ln q_{\xst}(\T_{\xst}(\bthe_s))$,
	which is equivalent to an ``amortized'' MLE problem with particles providing the estimation of sufficient statistics of $q_\xst$.
	Due to the zero-avoiding nature, forward KL tends to over-estimate the support of $p_\xst$.
	This leads to under-confidence approximation (``flat'' approximate distribution) and hence might deteriorate the quality of uncertainty measurements.
	This is expected to be more serious when $p_\xst$ is multi-modal.

    \paragraph{EMD} It is known that EMD provides much weaker topology than other probability distance measures~\cite{peyre2019computational}.
	In the application where data is supported on strictly  lower-dimensional manifolds,
	EMD provides more stable gradient than KL divergence~\citep{arjovsky2017wasserstein,tolstikhin2017wasserstein}.
	An example of particles on low-dimensional manifold is shown in Fig.~\ref{fig:empicical}(d).

    Specifically, the KR dual representation of EMD~\citep{villani2008optimal} in our problem is given by
	$W_{1}(p_\xst, q_\xst) = \sup_{\|\psi\|_{\lip} \leq 1} \mbb{E}_{\bthe \sim p(\bthe|\D)} [\T_{\xst}(\bthe)] - \mbb{E}_{\pist \sim q_\xst} [\psi(\pist)],$
	where $\|\cdot\|_{\lip}$ denotes the Lipschitz semi-norm and $\psi$ is known as the \emph{critic} (or the discriminator~\citep{arjovsky2017wasserstein}).
    As the induced distribution is conditioned on $\xst$, a local critic $\psi_\xst$ should be defined for each $\xst$.
	Following \citet{arjovsky2017wasserstein}, intractable supremum is solved by parameterizing $\psi_\xst=\psi(\cdot;w_\xst)$.
	To avoid training $|\D^\prime|$ local critics, we propose to let $w$ be the global weight, 
	and let $\psi$ depends on $\xst$, i.e., $\psi_\xst = \psi(\pist; w, \xst)$ (see Fig.~\ref{fig:meta}, Appendix).
    The final aggregated training objective becomes,
    \begin{equation}
    \label{eq:opt}
        \min_{\bg{\alpha}} \max_{w} \quad \mathbb{E}_{p_{\D^\prime}(\xst)}
        \left[
        \mathbb{E}_{ p (\bthe | \D)}
        [\psi(\T_{\xst}(\bthe);w, \xst)] - 
        \mathbb{E}_{\qx} [\psi(\pist;w, \xst)]
        \right]
        + \lambda \mathcal{R}(w),
    \end{equation}
    where $w$ is the introduced global parameter for $\psi$, $\mathcal{R}(w)$ is the imposed gradient penalty~\citep{gulrajani2017improved} over $w$ to enforce the Lipschitz constraint. 
    Solving the minimax problem requires the supremum to be attained for each $\xst$ under the Lipschitz constraint.
    Specifically, in every optimization step, $\psi$ is trained to generate a critic for each $\xst$ that matches the exact EMD.
    Practically, this needs a high-capacity critic and the required capacity increases with the number of classes $K$.

    \paragraph{MMD} Let $\mathcal{H}_k$ be a reproducing kernel Hilbert space (RKHS) defined by a positive-definite kernel $k$, the MMD between $p_{\xst}$ and $\qx$ can be written as
    \begin{equation}
        \label{MMD}
        \MMD_k(p,q) = \sup_{\psi\in\mathcal{H}_k, \|\psi\|_{\mathcal{H}}\leq1} \mbb{E}_{p(\bthe | \D)}[\psi( \T_{\xst}(\bthe) )] -\mbb{E}_{\qx} [\psi(\pist)]
    \end{equation}
    Compared with EMD, the advantage of MMD is that there is no need to train an NN as the critic that maximizes Eq.~\ref{MMD}.
    With kernel trick, MMD can be readily estimated in closed-form with its empirical version under finite sample (Sec.~\ref{ap:algo}).

    Compared with KL divergence, MMD is a valid statistical metric.
    Due to the symmetry property, the approximation is expected to be neither mean-seeking nor mode-seeking.
    %
    Therefore, MMD is not expected to have an under-confidence issue.

    \paragraph{Reparameterization} Note that optimization under both EMD and MMD requires gradient of the expectation of critic via sampling from $q$, which contains parameters.
    To obtain efficient gradient estimator and reduce variance, we use the reparameterization trick (specifically, implicit reparameterization trick).
    %
   	%
   	For details, see Sec.~\ref{ap:algo} in Appendix.

    \subsection{Amortization Gap}

    \label{analysis}
    
    To better understand nature of the proposed approximation, we consider the ``unamortized'' version of the approximation as an intermediate stage, which involves fitting separate approximations to each $p_{\xst}$.
    To demonstrate the idea, we leverage to MMD due to its nice property.
    For fixed $p_{\xst}$, the optimal point-wise approximation within family $\mathcal{Q}$ is defined as
    \begin{equation}
    \bqx^{\ast} = \argmin_{q \in \mathcal{Q}} \MMD_{k} (q, p_{\xst}).
    \end{equation}
    Then we have the following lemma:
    %
    \begin{lemma}
    Let $\mathcal{P}(\mathcal{S}^{\K})$ be the space of probability measures over the simplex, equipped with MMD metric defined by a universal kernel. If $\T$ satisfies Assumption~\ref{random_nn}, then the map $\xst \mapsto p_{\xst}$ is continuous. Further, if $\mathcal{Q} \subseteq \mathcal{P}(\mathcal{S}^{\K})$ is a closed convex model space, the projection $\bqx^{\ast}$ is unique and the map $\xst \mapsto \bqx^{\ast}$ is also continuous.
    \end{lemma}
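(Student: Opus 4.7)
The strategy is to reduce both assertions to standard Hilbert space facts via the mean embedding $\mu_k : \mathcal{P}(\mathcal{S}^{\K}) \to \mathcal{H}_k$ defined by $\mu_k(p) = \mathbb{E}_{\pist \sim p}[k(\pist, \cdot)]$. Universality of $k$ makes $\mu_k$ injective and yields the identity $\MMD_k(p,q) = \|\mu_k(p) - \mu_k(q)\|_{\mathcal{H}_k}$, so $(\mathcal{P}(\mathcal{S}^{\K}), \MMD_k)$ embeds isometrically into a Hilbert space. Closed convex subsets of $(\mathcal{P}, \MMD_k)$ then correspond to closed convex subsets of $\mathcal{H}_k$, which is precisely the setting in which the Hilbert projection theorem applies.

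For the first claim, I would use the push-forward identity $\mu_k(p_\xst) = \mathbb{E}_{\bthe \sim p(\bthe | \D)}[k(\T_\xst(\bthe), \cdot)]$. Assumption~\ref{random_nn} should supply continuity of $\T(\xst; \bthe)$ in $\xst$ for (almost) every $\bthe$; combined with continuity of $k$ and compactness of $\mathcal{S}^{\K}$ (which bounds $\|k(\pist, \cdot)\|_{\mathcal{H}_k} = \sqrt{k(\pist, \pist)}$), the integrand $k(\T_\xst(\bthe), \cdot)$ converges in $\mathcal{H}_k$-norm pointwise in $\bthe$ while remaining uniformly bounded. Dominated convergence for Bochner integrals then yields $\mu_k(p_{\xst_n}) \to \mu_k(p_\xst)$ in $\mathcal{H}_k$ whenever $\xst_n \to \xst$, which is precisely $\MMD_k(p_{\xst_n}, p_\xst) \to 0$.

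For the second claim, since $\mu_k$ is linear and injective, $\mu_k(\mathcal{Q})$ is convex in $\mathcal{H}_k$, and it inherits closedness from $\mathcal{Q}$ because $\mu_k$ is an isometric embedding. The Hilbert projection theorem then produces a unique nearest point $\mu_k(\bqx^{\ast}) \in \mu_k(\mathcal{Q})$ to every $\mu_k(p_\xst)$, and injectivity pulls this back to uniqueness of $\bqx^{\ast} \in \mathcal{Q}$. Continuity of $\xst \mapsto \bqx^{\ast}$ then follows from the $1$-Lipschitz property of projection onto a closed convex set in Hilbert space: $\|\mu_k(\bar{q}_{\xst_n}^{\ast}) - \mu_k(\bqx^{\ast})\|_{\mathcal{H}_k} \leq \|\mu_k(p_{\xst_n}) - \mu_k(p_\xst)\|_{\mathcal{H}_k}$, which tends to zero by the first part. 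The main obstacle is cleanly invoking Assumption~\ref{random_nn}: one needs $(\xst, \bthe) \mapsto \T_\xst(\bthe)$ to be jointly measurable with a continuous $\xst$-slice so that the Bochner-integral version of dominated convergence applies, and one should record that the uniform bound on $\|k(\pist,\cdot)\|_{\mathcal{H}_k}$ over the compact simplex provides the required integrable envelope.
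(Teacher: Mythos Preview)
Your proposal is correct, and the second half (uniqueness and continuity of the projection via the Hilbert projection theorem applied to mean embeddings) coincides exactly with the paper's argument. The first half, however, differs in a way worth noting.

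For the continuity of $\xst \mapsto p_{\xst}$, the paper gives a quantitative Lipschitz estimate: working directly with the dual form of $\MMD_k$, it bounds $|\psi(\T_{\xst}(\bthe)) - \psi(\T_{\xst'}(\bthe))|$ by a kernel-induced distance, compares that to the Euclidean distance on the simplex, and then invokes the Lipschitz constant $L_{\bthe}$ of $\T(\cdot;\bthe)$ together with $\expt_{p(\bthe|\D)} L_{\bthe} < \infty$ from Assumption~\ref{random_nn} to obtain $\MMD_k(p_{\xst}, p_{\xst'}) \leq C\,\expt[L_{\bthe}]\,\|\xst-\xst'\|$. Your route is qualitative: you use only that each $\T(\cdot;\bthe)$ is continuous (a consequence of Lipschitzness), obtain a uniform envelope from the boundedness of $k$ on the compact simplex, and appeal to dominated convergence for Bochner integrals. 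This is valid, but it discards the integrability hypothesis $\expt[L_{\bthe}] < \infty$ entirely; the assumption is there precisely to deliver the Lipschitz constant the paper exhibits. The paper's approach therefore yields a sharper conclusion (Lipschitz rather than merely continuous) and makes transparent why the specific moment condition in Assumption~\ref{random_nn} is imposed, whereas your argument would go through under the weaker hypothesis of mere continuity in $\xst$.
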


    Further, if we assume the model space is parameterized and identifiable in MMD,
    i.e. $\mathcal{Q} = \{q(\pist | \bg{\alpha}): \bg{\alpha} \in \mathcal{A} \}$ and $\MMD_{k}(q(\pist | \bg{\alpha}), q(\pist | \bg{\alpha}^\prime)) = 0$ if and only if $ \| \bg{\alpha} - \bg{\alpha}^\prime \| = 0$, we may obtain continuity in parameter space.
    The continuity of optimal parameters implies there exists a continuous function $\bg{\alpha}^{\ast} : \xst \in \mathcal{X} \mapsto \bg{\alpha}^{\ast}(\xst) \in \mathcal{A}$, which serves as the essential target of our amortized goal.

    To analyze how amortization affects the approximation, we define the local amortization gap as
    \begin{equation}
    \Delta(\xst) = \MMD_{k}(q_{\xst}, p_{\xst}) - \MMD_{k} (\bqx^{\ast}, p_{\xst}).
    \end{equation}

    Then it holds that
    \begin{equation}
    0 \leq \Delta(\xst) \leq \MMD_{k}(\qx, \bqx^{\ast}),
    \end{equation}
    where the lower bound is because $\bqx^{\ast}$ is the projection and the upper bound is due to triangle inequality.
    Then our goal, minimizing aggregated MMD loss $\expt_{p(\xst)} \MMD_{k}(\qx, p_{\xst})$, is essentially equivalent to minimizing aggregated amortization gap $\expt_{p(\xst)} \Delta(\xst)$, up to an irrelevant additive constant.

    One can see that $\bg{\alpha}^{\ast}$ is the global minimizer of aggregated amortization gap and hence the global minimizer of the aggregated MMD loss.
    Intuitively, if $\mathcal{F}$ is of enough capacity, then the infimum (over $\mathcal{F}$) of aggregated amortization gap could be 0, i.e.
    we can push the amortization gap arbitrarily small.
    If the optimal solution is covered by hypothesis space, no additional cost is introduced by amortizing the approximation.
    In other words, model loss due to using restrictive $\mathcal{Q}$ will dominate.
    Further, if the global optimum is reached, OPU approximation exactly matches the point-wise minimizer, i.e. $\MMD_{k}(\qx, q_{\xst}^{\ast}) = 0$, due to uniqueness of projection.
    




	\section{Related Work}

	\label{sec:related}
	
	In this section, related works are reviewed and compared with OPU in terms of methodology.
	To the literature of knowledge distillation, OPU contributes a new and generic view of Bayesian predictive uncertainty (induced distribution), and a larger space (simplex) for designing the student model.
	This helps with extracting richer information from the Bayes teacher and designing more expressive student models.
	To the literature of uncertainty measurement, OPU contributes a generic (for all Bayes classifier models) and flexible (design any type of student network) framework that does not require OOD data in training.

	In CompactApprox 
	\citep{snelson2005compact}, a parametric model composed of a small subset of ``best samples'' selected from the original MC samples 
	is used to approximate the full predictive Categorical distribution.
	Extending the approximation to Bayes NN,
	BDK \citep{balan2015bayesian} proposes to use NN to approximate the predictive Categorical distribution of
	a Bayes NN trained by stochastic gradient Langevin dynamics (SGLD).
	Specifically, the teacher network generates samples via SGLD and KL between the two distributions is minimized in an online fashion.
	However, the disadvantage is that data uncertainty, model uncertainty and  distributional uncertainty are all entangled in the class probabilities because categorical distributions are of limited expressiveness.
	
	Different from these previous works that only approximate the class probabilities,
	OPU approximates the induced {\em distribution} of class probabilities, which contains richer information including both class probabilities and prediction confidence
	(e.g., the three types of uncertainty observable via the samples in Fig.~\ref{fig:empicical}.)
    %
    We choose a Dirichlet distribution as the student model, and 
     explicitly disentangle the mean and concentration to fully capture the thee types of uncertainty.
    %
    We also explore other probability distance measures (EMD and MMD), showing that KL yields degenerate prediction performance.
    %
	
	Using a Dirichlet to estimate uncertainty has also been explored by Deep Prior Network (DPN) \citep{malinin2018predictive}, where
	a parameterized Dirichlet is used in a Bayesian model to characterize the ``distributional uncertainty'',
	i.e., to tell if the data is in the training domain or not.
	However, DPN adds a stochastic layer in the Bayes model, rather than approximating a well-trained Bayes teacher.
	Due to the intractable inference, DPN uses an MAP estimate of the model posterior which incurs a \emph{loss of uncertainty}.
	To compensate for the lost characterization of uncertainty, DPN uses a hand-crafted training goal that explicitly requires OOD examples (which are typically unavailable in real-world applications).
	In contrast to DPN, our OPU  is able to: 
	1) extract predictive uncertainty from any Bayesian classification model according to the practical requirements; 
	2) choose various types for student model (e.g., quantized neural network) to enable fast prediction; 
	3) use only in-domain data in training to get a good uncertainty measure (see Sec.~\ref{sec:exp}).
	Note that \textbf{none} of these properties can be achieved by DPN.

	\section{Experiments}

	\label{sec:exp}
	
	\subsection{Experimental Setup}
    \label{sec:setup}
    \paragraph{Models and Tasks} In this section, we present the experimental results on using Bayesian NN (BNN).\footnote{The setup and results on BLR and GP are available in the appendix.}
	For each model, we choose a few Bayesian methods as teachers, and approximate them with our OPU.  
	We also compare with state-of-the-art approximations that are proposed for the specific types of methods. 
	For BNN, we use MCDP~\citep{gal2016dropout} and SGLD~\citep{welling2011bayesian} as the teacher, and BDK and DPN for comparisons.
	The methods for obtaining posterior samples from the Bayes teachers are in Appendix B.
	For each type of model, in-domain misclassification (MisC) detection, out-of-domain (OOD) input detection, prediction performance and prediction time are presented.
	%
	
	%
	%
	
	

	\paragraph{Baselines and Uncertainty Measures} For the uncertainty measures of teachers, we take MCDP and KL as a example.
	For each testing data $\xst$, a Dirichlet is fit on particles under KL, to get an optimal $q_\xst^\ast$, whose differential entropy (D) is an uncertainty measure.
	For the uncertainty in prediction, to avoid the model loss, entropy (E) and maximum probability (P) of the averaged particle are directly adopted as uncertainty measures.
	This gives MCDP-KL model as a baseline.
	The other baselines MCDP-EMD and MCDP-MMD are also obtained similarly.
	Note that they share the same E and P as the sample mean is the same.
	The D of $q_\xst^\ast$ is expected to be the best uncertainty measure that OPU can approach theoretically (when the amortization loss is zero, see Sec.~\ref{analysis}).
    The baselines for SGLD are obtained in a similar way.
	Students use 
	categorical entropy (for BDK) or D (for DPN) and P of the output distributions.
	For OPU, we consider E and P of the prediction model, and the  scalar output of the concentration model (C) as the measures.
	%
	%
	%
	
	%
	
	\paragraph{Data and Evaluation Metrics} For a fair comparison, we let $\D'=\D$ in the approximation of OPU.
	The in-domain dataset is split to training data and testing data, i.e., $\D^{\mathrm{in}}=\D'=\{\D^{\mathrm{tr}},\D^{\mathrm{te}}\}$, which is used for training models, evaluating prediction and MisC~detection.
	The OOD dataset $\D^{\mathrm{ood}}$ and $\D^{\mathrm{te}}$ are used for OOD detection.
	To assess the performance, we use accuracy, time, Area under the ROC (AUROC) and PR (AUPR),
	following the baseline in \citep{hendrycks2016baseline}.
	Time is evaluated on the whole $\D^{\mathrm{te}}$.
	To save space, we only present the best performing uncertainty measure (E, P or C) for each task and method.
	We use the MXNet implementation of BDK and GPflow implementation of GP,
	and the remaining models are implemented in Pytorch.
	All experiments run on a desktop with an i7-8700 CPU and an RTX-2080 Ti GPU.
    The experiments for Bayesian logistic regression follows the same setup as Gaussian process.

	\subsection{Bayesian Neural Network}

	The experiments for Bayesian neural network use MNIST and balanced EMNIST datasets as $\D^{\mathrm{in}}$, and  use
	Omniglot and SEMEION dataset as $\D^{\mathrm{ood}}$, as in~\cite{malinin2018predictive}.
	MNIST
	%
	%
	is an image dataset of handwritten digits from 0 to 9, which contains 60k training data points and 10k testing data points.
	Balanced EMNIST
	%
	%
	is an image dataset of handwritten characters in 47 classes, which contains 131.6k data points.
	Omniglot
	%
	%
	is an image dataset that contains 1623 handwritten characters from 50 different alphabets.
	SEMEION
	%
	%
	is an image dataset that contains 1593 handwritten digits.
	%

    %
    Given MCDP and SGLD as teacher models, the baselines are obtained as illustrated in Sec.~\ref{sec:setup}.
	The rest models are: OPU approximating MCDP (OPU-MCDP), OPU-SGLD, BDK-SGLD, BDK-Dir-SGLD and DPN.
    For BDK-Dir-SGLD, we replace the Categorical distribution in BDK by a Dirichlet without disentangling the mean and concentration, then train it with the same MC ensemble as OPU-SGLD.
    This is to show the benefits of explicit disentanglement. 
	%
	
	The NN architecture used by these models is an MLP with size 784-400-400-10, ReLU activations,
	and softmax outputs, following~\citet{balan2015bayesian}.
	For CM, we use an MLP with size 784-400-400-1.
	MCDP is trained by SGD with hyper-parameters:
	dropout-rate of 0.5, learning rate $5\times10^{-4}$, mini-batch size of 256, number of iterations $10^3$.
	The parameters of critic are shown in Appendix.
	For MMD, we use a summation of RBF kernel and polynomial kernel.
	OPU-MCDP is trained by Adam with hyper-parameters: number of iterations $100$, learning rate for student $10^{-3}$.
	The training of SGLD and BDK follows~\citet{balan2015bayesian}.
	Then OPU-SGLD is trained with the same hyperparameters as OPU-MCDP.
	Results of DPN are from~\citet{malinin2018predictive}.
The results are presented in Table~\ref{tab:bnn_mnist}.
	
	\begin{table}[t]
		\caption{Results on BNN - MNIST. The doubt quote means ``same as above''.}

        \centering
		\label{tab:bnn_mnist}
\begin{small}
\centering
\setlength{\tabcolsep}{0.5em} 
\begin{tabular}{c|cc|cc|cc|c|c}
\hline
\multirow{2}{*}{Model}& \multicolumn{2}{c|}{MisC detection} & \multicolumn{2}{c|}{Omniglot} & \multicolumn{2}{c|}{SEMEION} & Acc.& Test \\
 ~& AUROC & AUPR & AUROC & AUPR & AUROC & AUPR & (\%)& time(s) \\
\hline
 MCDP-KL &  97.3 (E)  & 43.0 (E) & 99.4 (D) & 99.7 (D) & 86.8(E) & 53.8 (P) & 97.9 & 210.6 \\
 MCDP-EMD &  97.3 (E)  & 43.0 (E) & 99.6 (D) & 99.9 (D) & 86.8(E) & 53.8 (P) & 97.9 &\textquotedbl\\
 MCDP-MMD &  97.3 (E)  & 43.0 (E) & 99.7 (D) & 99.9 (D) & 90.1 (D) & 71.2 (D) & 97.9 &\textquotedbl\\
 OPU-MCDP-KL & 94.2 (E) & 37.7 (E) & \textbf{100} (C) & 77.0 (C) & 91.4 (C) & 67.3 (C) & 96.2 & \textbf{0.443}\\
 OPU-MCDP-EMD & 95.3 (P) & \textbf{43.7} (P) & \textbf{100} (C) & \textbf{100} (C) & 93.3 (C) & 82.5 (C) & 96.1 &\textquotedbl\\
 OPU-MCDP-MMD & \textbf{97.2} (P) & 40.9 (P) & \textbf{100} (C) & \textbf{100} (C) & \textbf{99.8} (C) & \textbf{98.6} (C) & \textbf{97.9} &\textquotedbl\\
\hline
 SGLD-KL &  97.9 (P)  & 46.2 (E) & 99.2 (E) & 99.6 (E) & 89.3 (E) & 46.8 (E) & 98.4 &233.5\\
 SGLD-EMD &  97.9 (E)  & 46.2 (E) & 99.4 (D) & 99.7 (D) & 89.9 (D) & 47.1 (D) & 98.4 &\textquotedbl\\
 SGLD-MMD &  97.9 (E)  & 46.2 (E) & 99.2 (E) & 99.6 (E) & 89.3 (E) & 46.8 (E) & 98.4 &\textquotedbl\\

 OPU-SGLD-KL & 94.2 (E) & \textbf{46.7} (E) & \textbf{100} (C) & \textbf{100} (C) & \textbf{99.5} (C) & \textbf{98.4} (C) & \textbf{98.2} & \textbf{0.443}\\
 OPU-SGLD-EMD & 93.7 (P) & 44.4 (E) & \textbf{100} (C) & \textbf{100} (C) & 98.9 (C) & 96.2 (C) & 98.0 &\textquotedbl\\
 OPU-SGLD-MMD & \textbf{97.2} (P) & 44.6 (E) & \textbf{100} (C) & \textbf{100} (C) & 99.1 (C) & 98.0 (C) & 98.1 &\textquotedbl\\
\hline
 BDK-SGLD &  85.9 (E)  & 46.6 (E) & 46.1 (E) & 41.7 (E) & 35.3 (P) & 46.5 (P) & 92.1 & \textbf{0.441}\\
 
 BDK-DIR-SGLD & 89.9 (E) & 40.0 (E) & 95.4 (E) & 96.4 (E) & 74.7 (E) & 38.3 (E) & 94.1 & \textquotedbl\\
 \hline
 DPN & 99.0 (E) & 43.6 (E) & 100 (E) & 100 (E) & 99.7 (E) & 98.6 (E) & 99.4 &$-$\\
\hline
\end{tabular}
\end{small}

	\end{table}
	
	\begin{table}[t]
		\caption{Results on BNN - Balanced EMNIST.}

        \centering
		\label{tab:bnn_emnist}
\begin{small}
\centering
\begin{tabular}{c|cc|cc|cc|c}
\hline
\multirow{2}{*}{Model}& \multicolumn{2}{c|}{MisC detection} & \multicolumn{2}{c|}{Omniglot}    & Acc. \\
 ~& AUROC & AUPR & AUROC & AUPR & (\%) \\
\hline
 MCDP-KL &  89.7 (P)  & 46.8 (P) & 99.7 (E) & 99.7 (E)  & 88.8 \\
 MCDP-MMD &  89.7 (P)  & 46.8 (P) & 99.9 (D) & 99.9 (D)  & 88.8 \\

\hline
  OPU-MCDP-KL & 84.8 (P) & 40.6 (P) & 96.2 (E) /67.5 (C) & 96.5 (E) /63.7 (C) &  87.9 \\
 OPU-MCDP-MMD & \textbf{89.8} (P) & \textbf{49.6} (P) & \textbf{100.0} (C) & \textbf{100} (C)  & \textbf{88.4} \\

\hline
\end{tabular}
\end{small}

    \end{table}
    
    \textbf{Computation time.} 
    OPU offers a $\sim$500x speedup compared to the original MCDP/SGLD, as OPU only evaluate the model twice (PM and CM in the student network) while MCDP/SGLD evaluates for $S$ times.
    This confirms our idea of accelerating Bayesian prediction by \emph{diverting the sampling process from the test period to the approximation period}.
	Note that the time cost of MCDP/SGLD increases with more posterior samples involved.
	BDK is slightly faster than OPU because it runs one network while OPU runs both PM and CM.

    \textbf{OPU vs BDK.}
    In some tasks especially OOD detection, the measure of concentration outperforms the baseline.
    This is because the explicit disentanglement of mean and concentration helps ``targeted'' knowledge distillation, as shown in Sec.~\ref{amortized_approx}.
    By comparing OPU-SGLD-KL and BDK (trained by forward KL), we observe that OPU-SGLD-KL is significantly better in OOD detection tasks and AUROC in MisC detection.
    BDK shows a slight advantage in AUPR in MisC detection task.
    This is because the knowledge distillation only happens between two categorical variables in BDK, which only helps capturing prediction information.
    In contrast, OPU framework first extracts all information in a BNN with the induced distribution, then transfers the knowledge to a more expressive distribution with a small loss guaranteed (Sec.~\ref{analysis}).
    Adding a Dirichlet distribution to BDK (BDK-DIR-SGLD) helps improving the performance in OOD detection.
    However, on SEMEION, which is expected to be harder as it is more similar to MNIST, there is a large performance difference from OPU.
    This further validates the necessity of explicit disentanglement of mean and concentration.

    \textbf{OPU vs DPN.}
    %
	%
	Our OPU model (without OOD data in training) has comparable performance to DPN (which uses a hand-crafted goal and OOD data in training).
	Another reason that DPN performs slightly better is that DPN uses VGG-6 (4 Convolutional layer and 1 FC layer), which is a much stronger model than the 2-layer MLP model that other models use.
    
    \textbf{KL vs EMD vs MMD.}
    With MCDP, MCDP-MMD gives the best performance of differential entropy baseline (D).
    This is because the samples of MCDP are relatively spread out 
    over the simplex and might be multi-modal.
    The probability distance is fully captured by MMD under such case.
    EMD is expected to perform well as there are a lot of samples $\pist_s$ residing on a low-dimensional manifold.
    However, the performance seems to be degenerated due to the limited capacity of the hyper-network and the difficulty to train the minimax problem.
    KL presents the worst performance as expected because it is likely to be under confident with samples of MCDP.
    With SGLD, the performance of KL is the best except for AUROC of MisC detection.
    This is because the samples $\pist_s$ of SGLD over the simplex are much denser and are typically unimodal.

    \textbf{KL vs MMD (EMNIST).}
    The experiments are conducted on EMNIST whose number of classes is large.
    We choose Omniglot as the OOD detection dataset as it is also contains handwritten characters, which are harder than SEMEION for a model trained on EMNIST.
    A CNN with structure similar with LeNet is used for MCDP and OPU (20 and 50 output channels in two convolutional layers).
    The baseline approach  achieves a classification accuracy of 88.8\%.
    The samples of MCDP Bayes NN are expected to be even more dispersive and multi-modal than MCDP trained with MNIST.
    OPU trained with EMD failed to converge, 
    possibly because 
    the capacity of the hypernetwork was not enough.
    Therefore, we do not recommend to use EMD for amortized approximation of predictive uncertainty, unless a more scalable estimator of EMD can be provided.
    As shown in Table~\ref{tab:bnn_emnist}, the performance gap between OPU-MCDP-KL and the baseline is larger because the multi-modality is severe -- 
    the entropy of sample mean is a better measure for OOD detection  than the concentration.
    This further validates that OPU trained by KL suffers from an under-confidence issue.
    Specifically, KL forces OPU to cover the support of all samples,
    making the student distribution more dispersive.
    This inaccurate estimate of concentration affects the estimation of prediction results (mean) in turn, thus the accuracy is lower and MisC detection performance is degenerated.
    By contrast, MMD consistently provides an approximation that has similar performance with the teacher, that echoes the analysis in Sec.~\ref{analysis}.

	\section{Discussion}

    The idea of ``transferring'' the randomness from model posterior to a simple-structure distribution at the output can be generalized to other problems where a real-time evaluation of uncertainty is critical, 
    e.g., object segmentation.
	This allows interesting designs of structured output distributions.

	\bibliography{ref}

\begin{thebibliography}{22}
\providecommand{\natexlab}[1]{#1}
\providecommand{\url}[1]{\texttt{#1}}
\expandafter\ifx\csname urlstyle\endcsname\relax
  \providecommand{\doi}[1]{doi: #1}\else
  \providecommand{\doi}{doi: \begingroup \urlstyle{rm}\Url}\fi

\bibitem[Aitchison(1982)]{aitchison1982statistical}
John Aitchison.
\newblock The statistical analysis of compositional data.
\newblock \emph{Journal of the Royal Statistical Society: Series B
  (Methodological)}, 44\penalty0 (2):\penalty0 139--160, 1982.

\bibitem[Arjovsky et~al.(2017)Arjovsky, Chintala, and
  Bottou]{arjovsky2017wasserstein}
Martin Arjovsky, Soumith Chintala, and L{\'e}on Bottou.
\newblock Wasserstein {GAN}.
\newblock \emph{arXiv preprint arXiv:1701.07875}, 2017.

\bibitem[Balan et~al.(2015)Balan, Rathod, Murphy, and
  Welling]{balan2015bayesian}
Anoop~Korattikara Balan, Vivek Rathod, Kevin~P Murphy, and Max Welling.
\newblock Bayesian dark knowledge.
\newblock In \emph{Advances in Neural Information Processing Systems}, pp.\
  3438--3446, 2015.

\bibitem[Bul{\`o} et~al.(2016)Bul{\`o}, Porzi, and
  Kontschieder]{bulo2016dropout}
Samuel~Rota Bul{\`o}, Lorenzo Porzi, and Peter Kontschieder.
\newblock Dropout distillation.
\newblock In \emph{International Conference on Machine Learning}, pp.\
  99--107, 2016.

\bibitem[Figurnov et~al.(2018)Figurnov, Mohamed, and
  Mnih]{figurnov2018implicit}
Mikhail Figurnov, Shakir Mohamed, and Andriy Mnih.
\newblock Implicit reparameterization gradients.
\newblock In \emph{Advances in Neural Information Processing Systems}, pp.\
  441--452, 2018.

\bibitem[Gal \& Ghahramani(2016)Gal and Ghahramani]{gal2016dropout}
Yarin Gal and Zoubin Ghahramani.
\newblock Dropout as a {B}ayesian approximation: Representing model uncertainty
  in deep learning.
\newblock In \emph{International conference on machine learning}, pp.\
  1050--1059, 2016.

\bibitem[Gulrajani et~al.(2017)Gulrajani, Ahmed, Arjovsky, Dumoulin, and
  Courville]{gulrajani2017improved}
Ishaan Gulrajani, Faruk Ahmed, Martin Arjovsky, Vincent Dumoulin, and Aaron~C
  Courville.
\newblock Improved training of {W}asserstein {GAN}s.
\newblock In \emph{Advances in Neural Information Processing Systems}, pp.\
  5767--5777, 2017.

\bibitem[Han et~al.(2015)Han, Mao, and Dally]{han2015deep}
Song Han, Huizi Mao, and William~J Dally.
\newblock Deep compression: Compressing deep neural networks with pruning,
  trained quantization and huffman coding.
\newblock \emph{arXiv preprint arXiv:1510.00149}, 2015.

\bibitem[Hendrycks \& Gimpel(2017)Hendrycks and Gimpel]{hendrycks2016baseline}
Dan Hendrycks and Kevin Gimpel.
\newblock A baseline for detecting misclassified and out-of-distribution
  examples in neural networks.
\newblock \emph{Proceedings of International Conference on Learning
  Representations}, 2017.

\bibitem[Hensman et~al.(2015{\natexlab{a}})Hensman, Matthews, Filippone, and
  Ghahramani]{hensman2015mcmc}
James Hensman, Alexander~G Matthews, Maurizio Filippone, and Zoubin Ghahramani.
\newblock {MCMC} for variationally sparse {G}aussian processes.
\newblock In \emph{Advances in Neural Information Processing Systems}, pp.\
  1648--1656, 2015{\natexlab{a}}.

\bibitem[Hensman et~al.(2015{\natexlab{b}})Hensman, Matthews, and
  Ghahramani]{hensman2015scalable}
James Hensman, Alexander G de~G Matthews, and Zoubin Ghahramani.
\newblock Scalable variational gaussian process classification.
\newblock In \emph{Proceedings of the Eighteenth International Conference on
  Artificial Intelligence and Statistics}, 2015{\natexlab{b}}.

\bibitem[Hubara et~al.(2016)Hubara, Courbariaux, Soudry, El-Yaniv, and
  Bengio]{hubara2016binarized}
Itay Hubara, Matthieu Courbariaux, Daniel Soudry, Ran El-Yaniv, and Yoshua
  Bengio.
\newblock Binarized neural networks.
\newblock In \emph{Advances in neural information processing systems}, pp.\
  4107--4115, 2016.

\bibitem[Kendall \& Gal(2017)Kendall and Gal]{kendall2017uncertainties}
Alex Kendall and Yarin Gal.
\newblock What uncertainties do we need in bayesian deep learning for computer
  vision?
\newblock In \emph{Advances in neural information processing systems}, pp.\
  5574--5584, 2017.

\bibitem[Lin et~al.(2017)Lin, Zhao, and Pan]{lin2017towards}
Xiaofan Lin, Cong Zhao, and Wei Pan.
\newblock Towards accurate binary convolutional neural network.
\newblock In \emph{Advances in Neural Information Processing Systems}, pp.\
  345--353, 2017.

\bibitem[Malinin \& Gales(2018)Malinin and Gales]{malinin2018predictive}
Andrey Malinin and Mark Gales.
\newblock Predictive uncertainty estimation via prior networks.
\newblock In S.~Bengio, H.~Wallach, H.~Larochelle, K.~Grauman, N.~Cesa-Bianchi,
  and R.~Garnett (eds.), \emph{Advances in Neural Information Processing
  Systems 31}, pp.\  7047--7058. Curran Associates, Inc., 2018.

\bibitem[Peyr{\'e} et~al.(2019)Peyr{\'e}, Cuturi,
  et~al.]{peyre2019computational}
Gabriel Peyr{\'e}, Marco Cuturi, et~al.
\newblock Computational optimal transport.
\newblock \emph{Foundations and Trends{\textregistered} in Machine Learning},
  11\penalty0 (5-6):\penalty0 355--607, 2019.

\bibitem[Polson et~al.(2013)Polson, Scott, and Windle]{polson2013bayesian}
Nicholas~G Polson, James~G Scott, and Jesse Windle.
\newblock Bayesian inference for logistic models using {P}{\'o}lya--{G}amma
  latent variables.
\newblock \emph{Journal of the American statistical Association}, 108\penalty0
  (504):\penalty0 1339--1349, 2013.

\bibitem[Snelson \& Ghahramani(2005)Snelson and Ghahramani]{snelson2005compact}
Edward Snelson and Zoubin Ghahramani.
\newblock Compact approximations to {B}ayesian predictive distributions.
\newblock In \emph{Proceedings of the 22nd international conference on Machine
  learning}, pp.\  840--847. ACM, 2005.

\bibitem[S{\"u}nderhauf et~al.(2018)S{\"u}nderhauf, Brock, Scheirer, Hadsell,
  Fox, Leitner, Upcroft, Abbeel, Burgard, Milford,
  et~al.]{sunderhauf2018limits}
Niko S{\"u}nderhauf, Oliver Brock, Walter Scheirer, Raia Hadsell, Dieter Fox,
  J{\"u}rgen Leitner, Ben Upcroft, Pieter Abbeel, Wolfram Burgard, Michael
  Milford, et~al.
\newblock The limits and potentials of deep learning for robotics.
\newblock \emph{The International Journal of Robotics Research}, 37\penalty0
  (4-5):\penalty0 405--420, 2018.

\bibitem[Tolstikhin et~al.(2017)Tolstikhin, Bousquet, Gelly, and
  Schoelkopf]{tolstikhin2017wasserstein}
Ilya Tolstikhin, Olivier Bousquet, Sylvain Gelly, and Bernhard Schoelkopf.
\newblock Wasserstein auto-encoders.
\newblock \emph{arXiv preprint arXiv:1711.01558}, 2017.

\bibitem[Villani(2008)]{villani2008optimal}
C{\'e}dric Villani.
\newblock \emph{Optimal transport: old and new}, volume 338.
\newblock Springer Science \& Business Media, 2008.

\bibitem[Welling \& Teh(2011)Welling and Teh]{welling2011bayesian}
Max Welling and Yee~Whye Teh.
\newblock Bayesian learning via stochastic gradient {L}angevin dynamics.
\newblock In \emph{Proceedings of the 28nd international conference on Machine
  learning}, pp.\  681--688. ACM, 2011.

\end{thebibliography}
	\bibliographystyle{iclr2020_conference}

    \appendix
    \section{Appendix: Proof}

    \begin{assumption}
    \label{random_nn}
    Let $\T: \mathcal{X} \times \Theta \rightarrow \mathcal{Y}$ be a map between finite dimensional vector spaces. We say $\T$ satisfies Assumption~\ref{random_nn} for distribution $p$ if $\T(\cdot; \bthe)$ is Lipschitz and the Lipschitz constant $L_{\bthe}$ satisfies $\expt_{\bthe \sim p} L_{\bthe} < +\infty$.
    \end{assumption}
    
    \begin{lemma}
    Let $\mathcal{P}(\mathcal{S}^{\K})$ be the space of probability measures over simplex, equipped with MMD metric defined by a universal kernel. If $\T$ satisfies Assumption~\ref{random_nn}, then the map $\xst \mapsto p_{\xst}$ is continuous. Further, if $\mathcal{Q} \subseteq \mathcal{P}(\mathcal{S}^{\K})$ is a closed convex model space, $\xst \mapsto q_{\xst}^{\ast}$ is also continuous.
    \end{lemma}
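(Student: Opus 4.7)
The plan is to prove the two continuity claims in sequence, using the fact that MMD with a universal (hence characteristic) kernel metrizes weak convergence on the compact simplex, and that MMD is the RKHS distance between kernel mean embeddings.

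\textbf{Step 1: continuity of $\xst \mapsto p_{\xst}$.} Since $\mathcal{S}^{\K}$ is compact and $k$ is universal, $k$ is in particular characteristic, so $\mathrm{MMD}_k$ metrizes weak convergence of probability measures on $\mathcal{S}^{\K}$. It therefore suffices to show that $\xst_n \to \xst$ implies $p_{\xst_n} \rightharpoonup p_{\xst}$ weakly. Pick any continuous (hence bounded) test function $f$ on $\mathcal{S}^{\K}$. By definition of push-forward,
\begin{equation*}
\int f \, dp_{\xst_n} - \int f \, dp_{\xst} = \expt_{\bthe \sim p(\bthe|\D)}\bigl[ f(\T(\xst_n;\bthe)) - f(\T(\xst;\bthe)) \bigr].
\end{equation*}
Assumption~\ref{random_nn} gives $\|\T(\xst_n;\bthe) - \T(\xst;\bthe)\| \le L_{\bthe}\|\xst_n - \xst\|$, so $\T(\xst_n;\bthe) \to \T(\xst;\bthe)$ pointwise in $\bthe$, and by continuity of $f$ the integrand converges to $0$ pointwise. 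Since $|f|\le\|f\|_\infty<\infty$, dominated convergence yields the desired weak convergence, hence $\mathrm{MMD}_k(p_{\xst_n},p_{\xst}) \to 0$.

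\textbf{Step 2: continuity of $\xst \mapsto \bqx^{\ast}$.} The kernel mean embedding $\mu: q \mapsto \mu_q \in \mathcal{H}_k$ is an isometry from $(\mathcal{P}(\mathcal{S}^{\K}), \mathrm{MMD}_k)$ into the Hilbert space $\mathcal{H}_k$. Define $\widetilde{\mathcal{Q}} = \mu(\mathcal{Q}) \subseteq \mathcal{H}_k$. Since $\mu$ is a linear isometric embedding, $\widetilde{\mathcal{Q}}$ is convex whenever $\mathcal{Q}$ is convex, and closed whenever $\mathcal{Q}$ is closed in the MMD topology. The optimization defining $\bqx^{\ast}$ is thus the Hilbert projection of $\mu_{p_{\xst}}$ onto the closed convex set $\widetilde{\mathcal{Q}}$, which is unique and $1$-Lipschitz in its argument by the standard Hilbert projection theorem. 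In particular, letting $P$ denote this projection,
\begin{equation*}
\mathrm{MMD}_k(\bqx^{\ast}, \bar q^{\ast}_{\xst'}) = \| P(\mu_{p_{\xst}}) - P(\mu_{p_{\xst'}}) \|_{\mathcal{H}_k} \le \| \mu_{p_{\xst}} - \mu_{p_{\xst'}} \|_{\mathcal{H}_k} = \mathrm{MMD}_k(p_{\xst}, p_{\xst'}),
\end{equation*}
and continuity of $\xst \mapsto \bqx^{\ast}$ follows by composing with Step 1.

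\textbf{Anticipated obstacle.} The substantive point is the interplay between the two notions of ``closed convex'': convexity of $\mathcal{Q}$ as a set of probability measures must be transported into Hilbert convexity of its embedding, which relies on linearity of $\mu$, and closedness must be phrased in the MMD topology so that $\widetilde{\mathcal{Q}}$ is norm-closed in $\mathcal{H}_k$ (compactness of $\mathcal{S}^{\K}$ and boundedness of $k$ make $\mu$ continuous, ensuring this transfer is clean). Once this embedding is set up correctly, both uniqueness of $\bqx^{\ast}$ and its $1$-Lipschitz dependence on $p_{\xst}$ are immediate from the Hilbert projection theorem, so the only analytic work is Step~1, which is a routine dominated-convergence argument enabled by Assumption~\ref{random_nn}.
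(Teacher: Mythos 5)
Your proposal is correct, and its second half coincides with the paper's argument: the paper also identifies $\bqx^{\ast}$ with the Hilbert-space projection of the kernel mean embedding of $p_{\xst}$ onto the embedded model space and invokes non-expansiveness of the projection to get $\MMD_k(\bqx^{\ast},\bar q^{\ast}_{\xst'})\leq \MMD_k(p_{\xst},p_{\xst'})$, exactly as you do. Where you diverge is Step 1. The paper proves the stronger statement that $\xst\mapsto p_{\xst}$ is \emph{Lipschitz} in MMD: it bounds $\sup_{\|\psi\|_{\mathcal{H}_k}\leq 1}\expt_{p(\bthe|\D)}|\psi(\T_{\xst}(\bthe))-\psi(\T_{\xst'}(\bthe))|$ by the kernel metric $d_k$, then by $C\,\expt_{p(\bthe|\D)}L_{\bthe}\,\|\xst-\xst'\|$, which genuinely uses the integrability $\expt L_{\bthe}<\infty$ from Assumption~\ref{random_nn} (and implicitly some smoothness of $k$ to dominate $d_k$ by the Euclidean norm). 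Your route instead imports the metrization theorem (universal kernel on a compact domain makes MMD metrize weak convergence) and runs a dominated-convergence argument; this yields plain continuity, which is all the lemma claims, uses only that $\T(\cdot;\bthe)$ is continuous for each $\bthe$ rather than the integrated Lipschitz constant, and sidesteps the paper's somewhat hand-waved ``topology equivalence'' bound on $d_k$ — at the price of being purely qualitative (no modulus of continuity, so no Lipschitz rate) and of leaning on a nontrivial external result. One small point to tighten in your Step 2: continuity of the mean embedding $\mu$ alone does not make $\mu(\mathcal{Q})$ norm-closed; the clean justification is that $\mathcal{P}(\mathcal{S}^{\K})$ is compact under the MMD/weak topology (Prokhorov plus the metrization you already invoked), so a closed $\mathcal{Q}$ is compact and its isometric image is compact, hence closed — a gap the paper's own proof also leaves implicit, so it does not distinguish the two arguments.
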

    
    \begin{proof}
    We simply show both maps are Lipschitz continuous with MMD metric on $\mathcal{P}(\mathcal{S}^\K)$. Let $\xst, \xst^\prime \in \mathcal{X}$ and $\psi \in \mathcal{H}_{k}$ such that $\| \psi \|_{\mathcal{H}_{k}} \leq 1$.
    For $\xst \mapsto p_{\xst}$, the ``push-forward'' definition of $p_{\xst}$ leads to
    \begin{align*}
    \MMD_{k}(p_{\xst}, p_{\xst^\prime})
    & = \sup_{\| \psi \| \leq 1} \,
    \expt_{p(\bthe | \D)} \psi(\T_{\xst} (\bthe)) -
    \expt_{p(\bthe | \D)} \psi(\T_{\xst^\prime} (\bthe))
    \\
    & \leq
    \sup_{\|\psi\| \leq 1} \,
    \expt_{p(\bthe | \D)} | \psi(\T_{\xst} (\bthe)) - \psi(\T_{\xst^\prime} (\bthe)) |
    \\
    & \leq
    \sup_{\|\psi\| \leq 1} \,
    \| \psi \|
    \expt_{p(\bthe | \D)} d_{k}( \T_{\xst} (\bthe), \T_{\xst^\prime} (\bthe) )
    \\
    & \leq
    C \expt_{p(\bthe | \D)} \| \T_{\xst} (\bthe) - \T_{\xst^\prime} (\bthe) \|
    \\
    & \leq
    C \expt_{p(\bthe | \D)} L_{\bthe} \| \xst - \xst^\prime \|,
    \end{align*}
    where $d_{k} = $ is a kernel-based distance. 
    The constant $C$ emits when bounding $d_{k}$ with Euclidean norm and the existence of such a constant is due to topology equivalence in finite-dimensional space.
    For $\xst \rightarrow \bqx^{\ast}$, we notice that $\bqx^{\ast}$ is the projection of $p_{\xst}$ onto $\mathcal{Q}$ (effectively the projection of kernel mean embeddings in $\mathcal{H}_{k}$).
    By projection theorem in Hilbert space, the projection map $p_{\xst} \mapsto \bqx^{\ast}$ is non-expansive, i.e.
    \begin{equation*}
    \MMD_{k} (\bar{q}_{\xst}^{\ast}, \bar{q}_{\xst^\prime}^{\ast})
    \leq
    \MMD_{k} (p_{\xst}, p_{\xst^\prime}),
    \end{equation*}
    which leads to Lipschitz property of $\xst \mapsto \bqx^{\ast}$.
    \end{proof}
    
    \section{Appendix: algorithms.}
    \label{ap:algo}
    For presenting the algorithms, we slightly change the notation.
    We let $\bg{\alpha}(\xst)=\bg{\alpha}(\xst,\bg{\phi})$,
    $\mathbf{h}(\xst)=\mathbf{h}(\xst,\bg{\phi}_1)$,
    $g(\xst)=g(\xst,\bg{\phi}_2)$,
    where $\bg{\phi} = \{\bg{\phi}_1,\bg{\phi}_2\}$,
    $\bg{\phi}_1$ and $\bg{\phi}_2$ are the parameters of prediction model $\mathbf{h}$ and concentration model $g$, respectively.

    \textbf{KL.}
    With the training objective
    \begin{equation}
    \min_{\bg{\phi}} \quad - \mathbb{E}_{p_{\D'}(\xst)} \frac{1}{S}\sum_s\ln q(\T(\xst;\bthe_s) | \xst, \bg{\alpha}),
    \end{equation}
    
    The student model is updated by doing $\bg{\phi}_1^{t+1}:=\bg{\phi}_1^t-\gamma^t(-\frac{1}{S}\sum_{\boldsymbol{\theta}_s}\nabla_{\bg{\phi}_1}\ln q(\T(\mathbf{x}^\ast;\boldsymbol{\theta}_s)|\alpha(\mathbf{x}^\ast;\bg{\phi}_1)))$ and $\bg{\phi}_2^{t+1}:=\bg{\phi}_2^t-\gamma^t(-\frac{1}{S}\sum_{\boldsymbol{\theta}_s}\nabla_{\bg{\phi}_2}\ln q(\T(\mathbf{x}^\ast;\boldsymbol{\theta}_s)|\alpha(\mathbf{x}^\ast;\bg{\phi}_2)))$ alternately,
    where $\gamma^t$ is the learning rate at iteration $t$ and $\xst^\ast$ is the input at this iteration.

    \textbf{EMD.}
    The 
    With the following training objective,
    \begin{equation}
    \label{eq:opt}
    \nonumber
        \min_{\bg{\phi}} \max_{w} \quad \mathbb{E}_{p_{\D^\prime}(\xst)}
        \left[
        \mathbb{E}_{ p (\bthe | \D)}
        [\psi(\T_{\xst}(\bthe); w, \xst)] - 
        \mathbb{E}_{\qx} [\psi(\pist; w, \xst)]
        \right]
        + \lambda \mathcal{R}(w),
    \end{equation}
	\begin{algorithm}[H]
	\caption{OPU Training Algorithm with EMD}
	\label{algo:train}


\KwIn{Posterior samples: $\{\bthe_{s}\}_{s = 1}^{S}$; OPU training data $\D^\prime$; Gradient penalty coefficient $\lambda$; Number of training iterations: $T_{\mathrm{stu}}$ and $T_{\mathrm{wit}}$.}

\While{$\bg{\phi}$ not converge}{
Sample $\xst^{(i)} \sim p_{\D^\prime}(\xst)$

\tcc{Update Approximation}

\For{iter in $1 \ldots T_{\mathrm{stu}}$}{

Sample $\{\pist_{s^\prime}\}_{s^\prime = 1}^{S^\prime} \sim q(\pist | \xst^{(i)}, \bg{\phi})$

$\mathcal{L}^{(i)}(\bg{\phi_1}) = - \sum_{s^\prime = 1}^{S^\prime} \psi(\pist_{s^\prime}; w, \xst^{(i)})$

$\bg{\phi}_1 \gets \mathrm{Adam}( \nabla_{\bg{\phi}_1} \mathcal{L}^{(i)})$

Sample $\{\pist_{s^\prime}\}_{s^\prime = 1}^{S^\prime} \sim q(\pist | \xst^{(i)}, \bg{\phi})$

$\mathcal{L}^{(i)}(\bg{\phi}_2) = - \sum_{s^\prime = 1}^{S^\prime} \psi(\pist_{s^\prime}; w, \xst^{(i)})$

$\bg{\phi}_2 \gets \mathrm{Adam}( \nabla_{\bg{\phi}_2} \mathcal{L}^{(i)})$

}

\tcc{Update Critic}

\For{iter in $1 \ldots T_{\mathrm{wit}}$}{

Sample $\{\pist_{s^\prime = 1}^{S^\prime}\} \sim q(\pist | \xst^{(i)}, \bg{\phi})$

Compute $\mathcal{R}(v)$

$\mathcal{L}^{(i)}(v) = \sum_{s = 1}^{S} \psi(\T(\xst; \bthe_s); w, \xst^{(i)}) - \sum_{s^\prime = 1}^{S^\prime} \psi(\pist_{s^\prime}; w, \xst^{(i)})+\lambda\mathcal{R}(w)$

$v \gets \mathrm{Adam}(\nabla_{v} \mathcal{L}^{(i)}$)}
}
    \end{algorithm}
    
    \textbf{MMD.} The kernel mean embedding of $p_\xst$ and $q_\xst$ are given by $\mu_p=\mathbb{E}_{p_{\xst}}[k(\pist,\cdot)]$ and $\mu_q=\mathbb{E}_{\qx}[k(\pist,\cdot)]$.
    The training objective is then, 
    \begin{equation}
        \min_{\bg{\phi}} \|\mu_p-\mu_q\|_{\mathcal{H}_k},
    \end{equation}

	\begin{algorithm}[H]
	\caption{OPU Training Algorithm with MMD}
	\label{algo:train}
	\KwIn{Posterior samples: $\{\bthe_{s}\}_{s = 1}^{S}$; OPU training data $\D^\prime$; Gradient penalty coefficient $\lambda$.}

\While{$\bg{\phi}$ not converge}{
Sample $\xst^{(i)} \sim p_{\D^\prime}(\xst)$


Sample $\{\pist_{q, s^\prime}\}_{s^\prime = 1}^{S^\prime} \sim q(\pist | \xst^{(i)}, \bg{\phi})$, get $\{\pist_{p, s^\prime}\}_{s^\prime = 1}^{S^\prime} = \{\T(\xst|\bthe_{s^\prime})\}_{s^\prime = 1}^{S^\prime}$

$\mathcal{L}^{(i)}(\bg{\phi_1}) = \frac{1}{S^\prime(S^\prime-1)}\sum_{m\neq n}^{S^\prime}k(\pist_{q,m}, \pist_{q,m}) + \frac{1}{S^\prime(S^\prime-1)}\sum_{m\neq n}^{S^\prime}k(\pist_{p,m}, \pist_{p,m}) -
\frac{2}{S^\prime(S^\prime-1)}\sum_{m,n=1}^{S^\prime,S^\prime}k(\pist_{q,m}, \pist_{p,n})
$

$\bg{\phi}_1 \gets \mathrm{Adam}( \nabla_{\bg{\phi}_1} \mathcal{L}^{(i)})$

Sample $\{\pist_{s^\prime}\}_{s^\prime = 1}^{S^\prime} \sim q(\pist | \xst^{(i)}, \bg{\phi})$

$\mathcal{L}^{(i)}(\bg{\phi_1}) = \frac{1}{S^\prime(S^\prime-1)}\sum_{m\neq n}^{S^\prime}k(\pist_{q,m}, \pist_{q,m}) + \frac{1}{S^\prime(S^\prime-1)}\sum_{m\neq n}^{S^\prime}k(\pist_{p,m}, \pist_{p,m}) -
\frac{2}{S^\prime(S^\prime-1)}\sum_{m,n=1}^{S^\prime,S^\prime}k(\pist_{q,m}, \pist_{p,n})
$

$\bg{\phi} \gets \mathrm{Adam}( \nabla_{\bg{\phi}_2} \mathcal{L}^{(i)})$

}
    \end{algorithm}
    
    \textbf{Reparameterization:} To obtain efficient gradient estimator and reduce variance,
   	we 
   	reparameterize the Dirichlet by an equivalent product of $K$ independent Gamma distributions.
   	If $\tilde{\bg{\pi}} \sim \mathrm{PG}(\tilde{\bg{\pi}} | \bg{\alpha}) = \prod_{k = 1}^{K} \mathrm{Gam}(\tilde{\pi}_k | \alpha_k)$,
   	then
    $\bg{\pi}  = \left( \sum_k \tilde{\pi}_k \right)^{-1} \tilde{\bg{\pi}}
   	\sim \mathrm{Dir}(\bg{\pi} | \bg{\alpha})$.
    By Thm.~3 in~\citep{arjovsky2017wasserstein}, in each $\mathcal{L}(\bg{\phi} | \xst)$, the supremum is attained at $\psi^\ast_\xst \in \lip_1$ and the gradient is $\nabla_{\bg{\phi}} \mathcal{L}(\bg{\phi} | \xst) = - \nabla_{\bg{\phi}} \mathbb{E}_{q(\pist | \xst, \bg{\phi})} [\psi_{\xst}^{\ast}(\pist)]$. 
    %
    %
   	Then as noted by~\citep{figurnov2018implicit}, the gradient $\nabla_{\bg{\phi}} \mathcal{L}(\bg{\phi} | \xst)$ can be implicitly computed without knowing the inverse of standardization function (e.g., CDF).
   	Specifically, by Eq.~5 in \citep{figurnov2018implicit},   $\nabla_{\bg{\phi}} \mathbb{E}_{q(\pist | \xst; \bg{\phi})} [\psi_{\xst}^{\opt}(\pist)]
   	= \mathbb{E}_{\mathrm{PG}(\tpist | \xst; \bg{\phi})} [\nabla_{\tpist} \psi_{\xst}^{\opt}(\pist) \nabla_{\bg{\phi}} \tpist]$,
   	%
   	%
    where the first term $\nabla_{\tpist}\psi^{\opt}_{\xst}(\pist)$ is computed via the chain rule and the second term $\nabla_{\bg{\phi}}\tpist$ is obtained by solving a local diagonal linear system.
    %
    Refer 
    to \citep{figurnov2018implicit} and references therein for details.
    
        \section{Appendix: Plots.}
    The graph representations of original view , isolated view of Bayes teacher and the student are shown in Fig.~\ref{fig:graphic}.
    \begin{figure}
    	\centering
    	\includegraphics[width=0.5\textwidth]{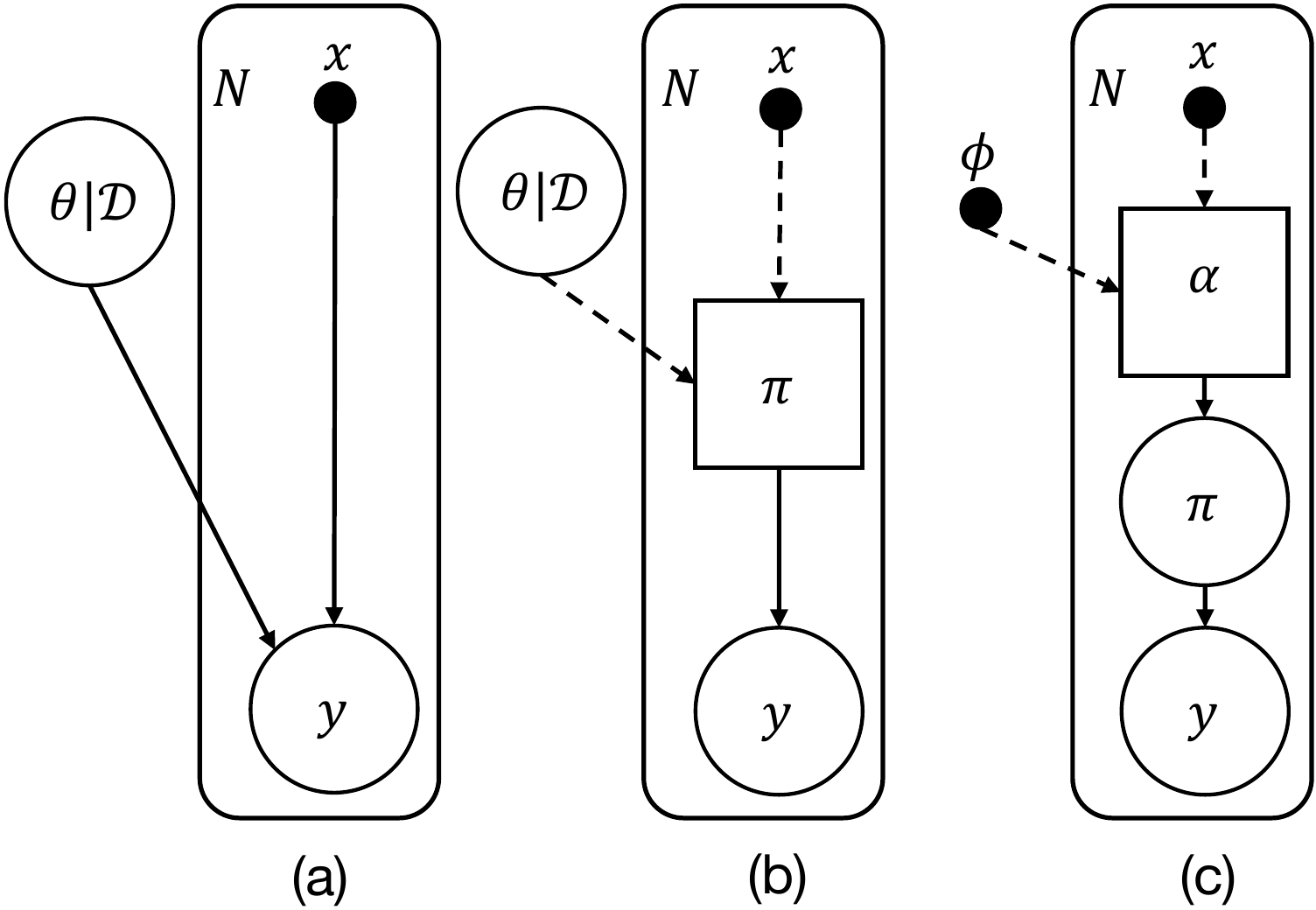} 
    	\caption{Graphical representation of probabilistic structure of the Bayes teacher (left and middle) and the student (right).
    		Dashed edges denote deterministic dependence, box nodes are deterministic and circle nodes are stochastic. The left and middle graphs correspond to the LHS and RHS of (\ref{eq:pi}), respectively.
    	}
    	\label{fig:graphic}
    \end{figure}
    
    \begin{figure}
    	\centering
    	\includegraphics[width=0.5\textwidth]{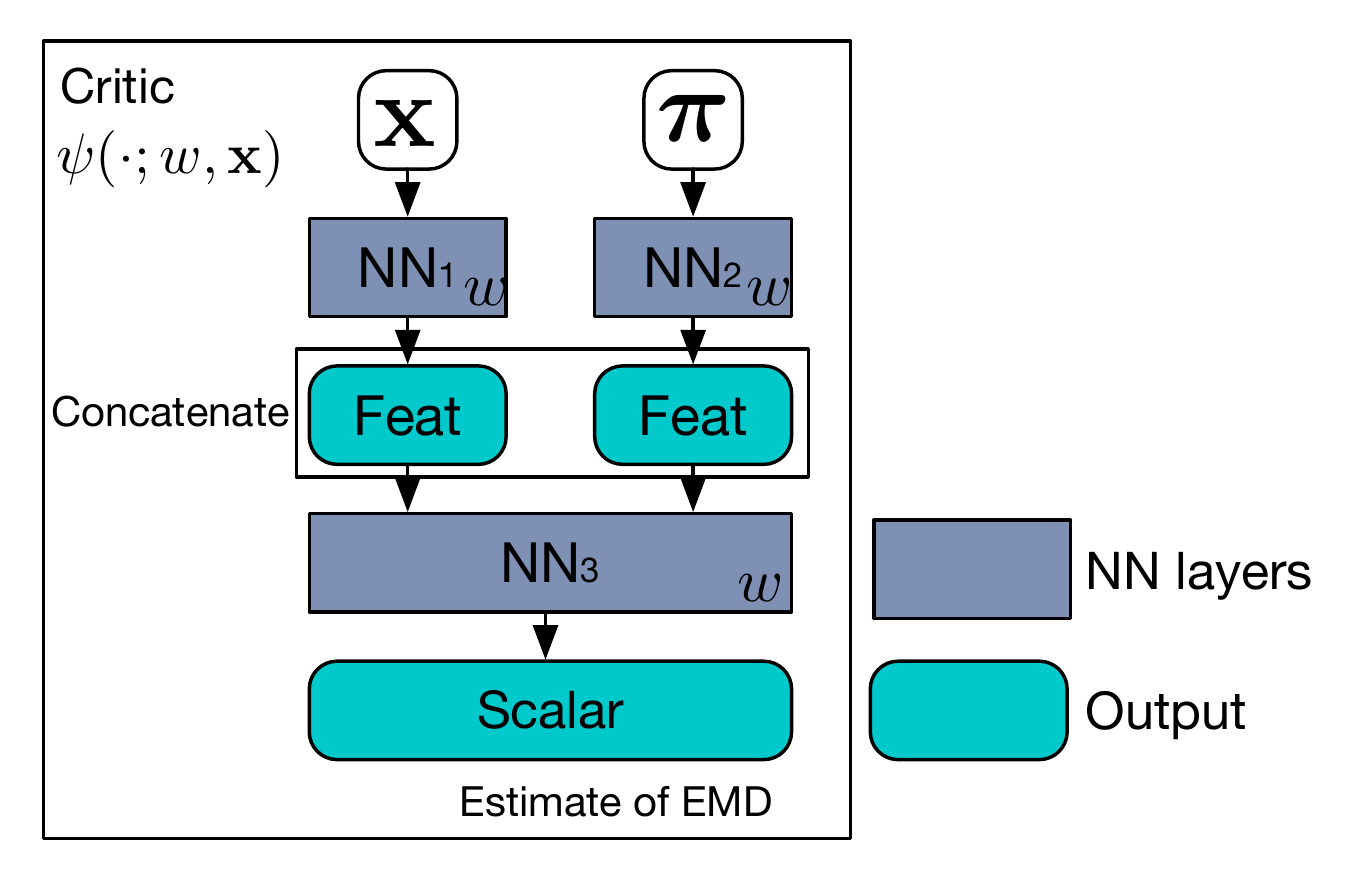} 
    	\caption{A brief example of the critic in EMD.}
    	\label{fig:meta}
    \end{figure}
    
    \section{Appendix: experiments}
	For the critic in EMD used in MNIST experiments, NN1 is a 784-256 MLP, NN2 is a 10-256 MLP and NN3 is a 512-256-1 MLP (see Fig.~\ref{fig:meta}).
	For the critic in EMD used in MNIST experiments, NN1 is the same as convolutional layers used in Ba, NN2 is a 10-256 MLP and NN3 is a 512-256-1 MLP (see Fig.~\ref{fig:meta}).
	
		\begin{table}[t]
		\caption{Results on Bayesian logistic regression models.
		}
		\label{tab:blr_mis_result}



\begin{center}
\begin{small}
\begin{tabular}{c|c|cc|cc|c|c}
\hline
\multirow{2}{*}{Data}&\multirow{2}{*}{Model}& \multicolumn{2}{c|}{MisC detection} & \multicolumn{2}{c|}{OOD detection} & Acc. & Time \\
~ & ~& AUROC & AUPR & AUROC & AUPR & (\%) & (sec.)\\
\hline
\multirow{3}{*}{Pima} & PG & \textbf{60.0} (Ent) & 24.2 (Ent) & 87.0 (Ent) & 76.7 (Ent) & \textbf{64.4} & $1.01\pm0.001$ \\
~ & CA-PG & 58.2 (Ent) & 24.2 (Ent) & 80.1 (Ent) & 74.5 (Ent) & 62.3 & $0.18\pm0.001$ \\
~ & OPU-PG & 59.7 (Ent) & \textbf{25.6} (Ent) & \textbf{100.0} (CM) & \textbf{100.0} (CM) & \textbf{64.4} & \textbf{0.01}$\pm0.002$\\
\hline
\multirow{3}{*}{Spam} & PG & \textbf{83.9} (Ent) & \textbf{24.3} (Ent) & 54.6 (Ent) & 53.5 (Ent) & \textbf{92.4} & $5.94\pm0.001$\\
~ & CA-PG & 64.1 (Ent) & 24.2 (Ent) & 71.5 (Ent) & 67.5 (Ent) & 85.4 & $0.36\pm0.001$ \\
~ & OPU-PG & \textbf{83.9} (Ent) & 23.8 (Ent) & \textbf{99.7} (CM) & 99.3 (CM) & \textbf{92.4} &  \textbf{0.01}$\pm0.002$ \\


\hline
\end{tabular}
\end{small}
\end{center}
	\end{table}

    \subsection{Bayesian logistic regression}
    
    We test two models in this experiment: Polya Gamma (PG),  CompactApprox approximating PG (CA-PG) and OPU approximating PG (OPU-PG).
	We draw 500 posterior samples from PG and train OPU with the following hyperparameters: number of epochs 100, learning rate for student.
	CA-PG is trained by first drawing 5000 samples from PG then evaluating the model with 50 randomly selected samples from them (same setup as CompactApprox).
	The random selection is repeated for $10^5$ times and we pick the best group of samples.
	As the results for the three metrics are similar, we only show the results trained with KL divergence.
	
	The results are shown in Table~\ref{tab:blr_mis_result}.
	%
	OPU-PG maintains similar performance with the original PG on prediction accuracy and MisC detection.
	Meanwhile, OPU outperforms CA-PG on prediction accuracy, MisC and OOD detection.
	For OPU, CM outperforms other uncertainty measures at OOD detection, which indicates it captures the distributional uncertainty well.
	OPU performs better than the PG Bayes teacher.
	The reason might be that a parametric model is learned to
    approximate the ensemble of discrete samples, which could produce a smoother output distribution (regularization),
    leading to better performance.
	OPU also achieves a $\sim$100-600x speedup from the original PG.
    
    \subsection{Gaussian Process}
    \begin{table}[t]
		\caption{Results on Gaussian process classification models.}
        \centering
		\label{tab:gp_mis_result}
		
\begin{small}
\centering
\begin{tabular}{c|c|cc|cc|c|c}
\hline
\multirow{2}{*}{Data}&\multirow{2}{*}{Model}& \multicolumn{2}{c|}{MisC detection} & \multicolumn{2}{c|}{OOD detection} & Acc. & Time \\
~ & ~& AUROC & AUPR & AUROC & AUPR & (\%) & (second)\\
\hline
\multirow{3}{*}{Pima} & SGPMC & 65.3 (E) & \textbf{46.4} (E) & 96.7 (E) & 94.9 (E) & \textbf{79.3} & \textbf{0.003}\\
~ & SVGP & 64.3 (E) & 43.2 (E) & 96.0 (E) & 91.1 (E) & 77.1 & 0.004\\
~ & OPU-SGPMC & \textbf{65.7} (E) & 44.4 (E) & \textbf{100.0} (C) & \textbf{100.0} (C) & 79.2 & 0.010\\
\hline
\multirow{3}{*}{Spam} & SGPMC & \textbf{86.7} (E) & 37.8 (E) & 98.6 (E) & 97.6 (E) & \textbf{92.4} & 0.056 \\
~ & SVGP & 86.2 (E) & 33.3 (E) & 99.2 (E) & 98.5 (E) & 92.1 & 0.032\\
~ & OPU-SGPMC & 86.5 (E) & \textbf{39.5} (E) & \textbf{100.0} (C) & \textbf{100.0} (C) & 92.0 & \textbf{0.011}  \\ 
\hline
\end{tabular}
\end{small}

    \end{table}
    
    For GP, we use SGPMC~\citep{hensman2015mcmc} as the teacher, and SVGP~\citep{hensman2015scalable} for comparison.

    This experiment uses Pima and Spambase datasets as $\D^{\mathrm{in}}$.
	Pima is a medical dataset with 769 data points and 9 dimensions.
	%
	Spambase is a text dataset with 4601 data points and 57 dimensions for identifying spam email.
	We generate the same number of data points from a zero-mean multivariate Gaussian distribution for $\D^{\mathrm{ood}}$.
	For each dataset, $10\%$ of data points are uniformly selected into the testing set $\D^{\mathrm{te}}$.
    We normalize the data by features with L2 norm.
	%
	%
	where $K$ is the number of classes.
	There are 3 models tested: SGPMC, OPU approximating SGPMC (OPU-SGPMC) and SVGP.
	SGPMC and SVGP are trained with $\frac{1}{10}|\D^{\mathrm{in}}|$ data points randomly selected from $\D^{in}$ as inducing points.
	Then 500 samples over functions of $\D^\mathrm{in}$ are generated from SGPMC.
    As the results for the three metrics are similar, we only show the results trained with KL divergence.

	The results are presented in Table~\ref{tab:gp_mis_result}.
	On MisC detection and prediction accuracy, OPU has similar performance to SGPMC, which indicates the effectiveness of approximating prediction results with ensemble of samples in the nonparametric family.
	With the same number of inducing points, SVGP performs slightly worse than SGPMC, because it incurs a two-fold approximation.
	Measuring uncertainty with CM in OPU outperforms other measures and models, 
	%
	which indicates the sharpness of the logistic-normal distribution can be captured via the CM the designed Dirichlet.

	SGPMC and SVGP are faster than OPU on Pima, but are slower than OPU on the larger Spambase.
	This is due to the static latency for setting up the GPU for OPU, 
	which becomes the main time cost when the dataset is small (as in Pima).
	%
	For the two GP methods, the computation time depends on the number of inducing points and the number of dimensions.
	Therefore, as the dataset becomes larger (Spambase), the computation time increases.

    \section{Appendix: sampling}
    
    	\label{approx}
	In this section, we illustrate the details for extracting samples from Bayesian logistic regression, Bayesian neural network and Gaussian process.
	Under some contexts, we use $\mb{X}$ and $\mb{Y}$ to collectively denote the inputs and outputs respectively for previous $\D$.
	
	\subsection{Bayesian Logistic Regression}
	
	The Polya-Gamma (PG) scheme~\citep{polson2013bayesian} is a data augmentation strategy that allows for a closed-form Gibbs sampler.
	In binary classification, i.e., $y_n \in \mathcal{Y} = \{1, 0\}$, 
	let $\bg{\theta}$ be the regression coefficients with a Gaussian conditional conjugate prior
	$p(\bg{\theta})\sim\mathcal{N}(\mb{0},\bg{\Lambda}^{-1})$.
	The PG Gibbs sampler is composed of the following two conditionals,
	\begin{eqnarray}
	\omega_n|\bg{\theta}, \mb{x}_n & \sim & \text{PG}(1, \mb{x}_n^{\mathsf{T}}\bg{\theta}) \\
	\bg{\theta}|\bg{\omega}, \D    & \sim & \mathcal{N}(\mb{m}_{\omega},\mb{V}_{\omega}),
	\end{eqnarray}
	where $\omega_n$ is the augmenting data corresponding to the $n$th data point. The posterior conditional variance and mean are given by
	$\mb{V}_\omega = (\mb{X}^\mathsf{T}\Omega\mb{X} + \bg{\Lambda}^{-1})^{-1}$ and
	$\mb{m}_\omega=\mb{V}^{-1}_{\omega}(\mb{X}^\mathsf{T}(\mb{y}-\frac{1}{2}))$, respectively.

	Given this formulation we will be able to collect samples from the posterior after a burn-in period.
	The posterior samples $\{\bg{\theta}\}_{s=1}^S$, together with dataset $\D^\prime$,
	are used to train our approximation with goal defined in Eq.~\ref{eq:opt}.

	As an MCMC method, the PG augmentation scheme offers accurate samples.
	Other alternative methods such as local variational approximation can be employed,
	which are much faster but sacrifice accuracy.

	\subsection{Monte Carlo Dropout}
	\label{MCDP}
	
	A neural network with dropout applied before every weight layer was shown to be an approximation to 
	probabilistic deep Gaussian process (GP)~\citep{gal2016dropout}.
	Let $q(\bg{\theta})$ to be the approximate distribution to the GP posterior.
	Here, $\bg{\theta}=\{\mb{\Theta}_i\}_{i=1}^L$ and $\mb{\Theta}_i$ is a parameter matrix of dimensions $K_i\times K_{i=1}$ for NN layer $i$.
	In this approximation, $q(\bg{\theta})$ can be defined through direct modification:
	\begin{align}
	\bg{\Theta}_i = \mb{M}_i \diag([z_{i,j}]_{i=1}^{K_i}),
	\end{align}
	where $z_{i,j}\sim \bern(p_i)$ for $i\in[L]$ and $j \in [K_{i-1}]$, given some prior dropout probabilities $p_i$ and
	matrices $\bg{\Theta}_i$ are treated as variational parameters.
	The binary variable $z_{i,j} = 0$ indicates that unit $j$ in layer $i-1$ being dropped out as an input to layer $i$.
	The predictive mean of this approximation is given by
	$\mathbb{E}[\yst | \xst] \approx \frac{1}{S}\sum_{s=1}^S f(\xst,\hat{z}_{1,s},\dots,\hat{z}_{L,s})$, which hence referred to as MC dropout (MCDP).

	We use OPU to approximate the uncertainty induced by $q(\bg{\theta})$.
	A sample in the MC ensemble
	is given by $\bg{\theta}_s=\{\bg{\Theta}_{i,s}\}_{i=1}^L=\mb{M}_i \diag([z_{i,j}]_{i=1}^{K_i})$.
	%
	%
	We set $\bg{\phi}_1=\{\bg{\Phi}_i\}_{i=1}^LZ$
	to the mean of $\bg{\theta}_s$, i.e.,
	$\bg{\Phi}_i=\frac{1}{S}\sum_{s=1}^S\bg{\Theta}_{i,s}=\mb{M}_i \frac{1}{S} \sum_{s=1}^S \diag([z_{i,j,s}]_{i=1}^{K_i})$.

	MCDP provides a simple way of approximating Bayesian inference through dropout sampling.
	However, it still introduces a variational approximation to exact Bayesian posterior.
	Therefore, we further include a more accurate way to generate MC ensemble - stochastic gradient Langevin dynamics (SGLD).

	\subsection{Vanilla SGLD}
	
	SGLD enables mini-batch MC sampling from the posterior via adding a noise step to SGD~\citep{welling2011bayesian}.
	We choose the ``vanilla'' version of SGLD in our approximation.
	Specifically, we start training of $f_{\bthe}(\cdot)$ from $\bthe^{(0)}$.
	In each epoch with mini-batch size $B$, 
	\begin{align}\label{eq:sgld}
	\bthe^{(t+1)} & = \bthe^{(t)} + \epsilon_t \nabla \log p(\bthe^{(t)} | \D) + \eta_t \\
	& = \bthe^{(t)} + \epsilon_t \nabla( \log p(\bthe^{(t)}) + \sum_{b=1}^B \log p(\mb{y}_b | \mb{x}_b, \bthe^{(t)})) + \eta_t,
	\end{align}
	where $B$ is the size of a mini-batch and $\eta_t\sim\mathcal{N}(0,2\epsilon_t\mb{I})$.
	After SGLD converges at step $T$, the samples $\bthe_s = \bthe^{(T+s)}$ is collected by
	running the training process for another $S$ iterations.
	We use averaged samples as parameter of $h$, i.e., $\phi_1=\frac{1}{S}\sum_{s=1}^S\bg{\theta}_s$ and
	train OPU by Eq.~\ref{eq:opt} with this ensemble.

	\subsection{Monte Carlo Gaussian Process}
	\label{GP}
	
	We apply the OPU framework to the GP framework, which demonstrate its use on a non-parametric classifier.
	Let data $\D$ be split into as input matrix $\mb{X}$ and output matrix $\mb{Y}$.
	We consider GP prior over the space of functions, i.e.,
	$\bg{\mu} \sim \mathcal{GP}(0, \mathcal{K})$.
	where $\mathcal{K}$ is a positive definite kernel controlling the prior belief on smoothness.
	Existing techniques allow us to compute $q(\bg{\mu})$ which approximates $p(\bg{\mu} | \D)$ and 
	is comparable to previous $q(\bthe)$ under a parametric model.
	In a classification task, the posterior $p(\bg{\mu}|\D)$ can be sampled via MCMC~\citep{hensman2015mcmc}
	or approximated, e.g., via variational approximation~\citep{hensman2015scalable}.
	Let $\bg{\mu}^\ast$ be a shorthand for $\bg{\mu}_{\xst}$ and $\pist$ is then defined as $\mathcal{S}(\bg{\mu}^\ast)$.
	If Gaussian variational approximation is used, the marginal posterior $p(\bg{\mu}^\ast | \xst, \D)$ at $\xst$
	induces a logistic-normal distribution for $p(\pist | \xst, \D)$.
	%
	%
	In our approximation, we obtain samples $\{\bg{\mu}^\ast_s\}_{s=1}^S$ from $p(\bg{\mu}^\ast| \xst, \D)$.
	The optimization goal is the same as that in the NN case with a different target distribution defined as 
	\begin{align}
	\hat{p}(\pist | \xst, \D) = \frac{1}{S}\sum_{s=1}^S \delta(\pist - \mathcal{S}(\bg{\mu}^\ast_s)).
	\end{align}
    
\end{document}